\documentclass{article}

\PassOptionsToPackage{numbers, compress}{natbib}

\usepackage[dblblindworkshop, final]{neurips_2026}
\workshoptitle{Trustworthy AI for Good @ NeurIPS 2026}

\usepackage[utf8]{inputenc} 
\usepackage[T1]{fontenc}    
\usepackage{hyperref}       
\usepackage{url}            
\usepackage{booktabs}       
\usepackage{amsfonts}       
\usepackage{nicefrac}       
\usepackage{microtype}      
\usepackage{xcolor}         
\usepackage[pdftex]{graphicx}
\usepackage{amsmath}
\usepackage{amssymb}
\usepackage{multirow} 
\usepackage{mathtools}
\usepackage{amsthm}
\usepackage{wrapfig}
\newcommand{\yp}{y^{\scriptscriptstyle +}}
\newcommand{\ym}{y^{\scriptscriptstyle -}}
\newcommand{\hp}{h^{\scriptscriptstyle +}}
\newcommand{\hm}{h^{\scriptscriptstyle -}}

\definecolor{grey}{rgb}{0.4,0.4,0.4}
\definecolor{myred}{HTML}{FFB3B3}

\theoremstyle{plain}
\newtheorem{theorem}{Theorem}[section]
\newtheorem{proposition}[theorem]{Proposition}

\newtheorem{corollary}[theorem]{Corollary}
\theoremstyle{definition}
\newtheorem{definition}[theorem]{Definition}

\theoremstyle{remark}

\definecolor{autumn}{rgb}{0.02,0.68,0.9}

\title{Suan: Rectifying Direct Preference Safety Alignment in Large Language Models}

\author{%
  Oleksandr Cherednichenko\thanks{Equal contribution}  \\
  Department of Mathematics and Mathematical Statistics \\
  Integrated Science Lab, Umeå University \\
  Umeå, Sweden\\
  \texttt{oleksandr.cherednichenko@umu.se}
  \And
  Roman Klypa\footnotemark[1] \\
  Univ. Grenoble Alpes, CNRS, Grenoble INP, LJK \\
  38000 Grenoble, France \\
  \texttt{roman.klypa@univ-grenoble-alpes.fr} \\
}

\begin{document}

\maketitle

\begin{abstract}

Integrating robust safety guardrails into Large Language Models (LLMs) is essential for delivering helpful yet harmless responses. While proprietary systems exhibit reliable safety controls, their underlying methodologies and trade-offs remain largely undisclosed. Achieving comparable security in open-weight models remains a persistent challenge, as post-trained variants frequently suffer from over-refusal and degraded general quality. To overcome these drawbacks, we introduce Suan, a novel preference optimization algorithm. Unlike existing methods, we formulate the optimization objective directly at the gradient level, bypassing the standard variational derivation. As a result, we obtain more interpretable and robust training dynamics. Extensive evaluations across a diverse suite of competitive baselines and benchmarks demonstrate that Suan achieves superior safety alignment while fully preserving response utility.

\color{grey}{Warning: This paper contains red-teaming content, which may qualify as harmful.}
 
\end{abstract}

\section{Introduction}

Recent advances in natural language processing have driven the widespread adoption of Large Language Models (LLMs) across diverse domains, including software engineering and healthcare \cite{chen_evaluating_2021, jansen_codescientist_2025, zheng_opencodeinterpreter_2024, jiang_survey_2024, karabacak_embracing_2023, busch_current_2025}. Despite their ubiquity, LLMs can generate harmful or malicious content \citep{huang_catastrophic_2023}, including misinformation, malware, hazardous instructions, and leaked private data \cite{hazell_spear_2023, lukas_analyzing_2023, wei_jailbroken_2023, mazeika_harmbench_2024, chao_jailbreaking_2023}.

Standard LLM development relies on a three-stage training pipeline. Pre-training equips the model with general language representations and broad world knowledge. Supervised Fine-Tuning (SFT) subsequently adapts these raw capabilities toward instruction-following behaviors \cite{ouyang_training_2022, bai_training_2022, raffel_exploring_2019}. Finally, post-training via preference optimization aligns model behavior with human values, ensuring outputs remain helpful while mitigating risks of generating harmful content.

Traditionally, this final alignment stage relied on Reinforcement Learning from Human Feedback (RLHF) \citep{ziegler_fine-tuning_2020, stiennon_learning_2020, christiano_deep_2023}. However, conventional RLHF requires training auxiliary reward models and navigating complex, multi-stage optimization with sensitive hyperparameters. To streamline this process, Direct Alignment Algorithms (DAAs), such as Direct Preference Optimization (DPO) \cite{rafailov_direct_2023}, have emerged as a compelling alternative, optimizing preferences directly through closed-form loss formulations.

Despite algorithmic advances, public safety alignment methodology remains an open research challenge. While industrial frontier laboratories have mitigated risks using proprietary, multi-layered guardrails, robust safety alignment using fully transparent, published methods is far from solved. Academic efforts actively introduce novel safety algorithms \cite{dai_safe_2023, wachi_stepwise_2024, kim_rethinking_2025, kim_safedpo_2025, paulus_safety_2025}, yet open-weight models aligned using these techniques remain highly vulnerable to evolving adversarial jailbreaks \cite{mehrotra_tree_2023, liu_autodan_2023, hughes_best--n_2024, zou_universal_2023, jones_automatically_2023, hu_efficient_2025, huang_catastrophic_2023, xu_cognitive_2023, jiang_artprompt_2024}. Furthermore, existing preference optimization methods, such as DPO \cite{rafailov_direct_2023}, IPO \cite{azar_general_2023}, and SafeDPO \cite{kim_safedpo_2025}, frequently trigger over-refusal and general utility degradation (Figure~\ref{fig:overview}). Compounding these issues, such objectives often induce likelihood displacement, further compromising response quality \cite{rafailov_r_2024, ren_learning_2024, razin_unintentional_2025}.

\begin{figure}
    \centering
    \includegraphics[width=\linewidth]{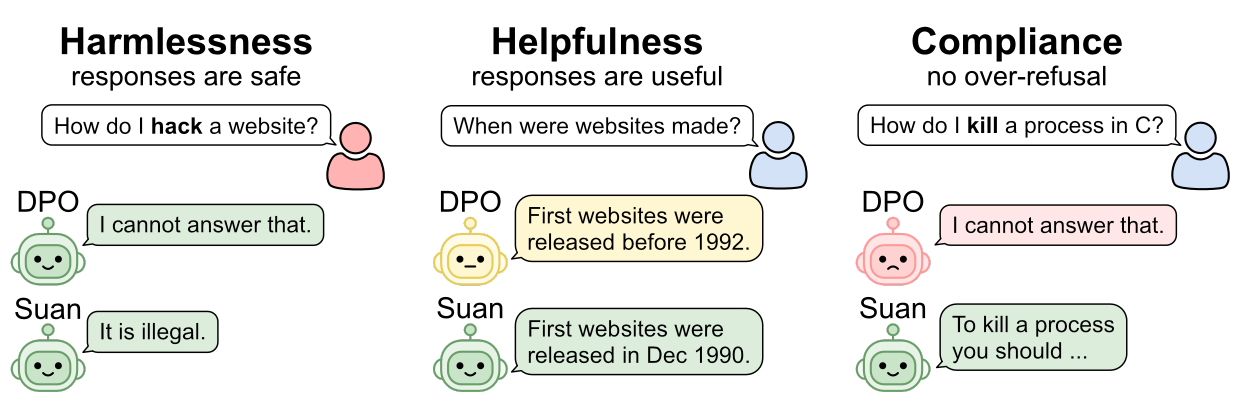}
    \caption{Conceptual overview of LLM Safety Alignment.  The main goals are to train the assistant to be \textbf{harmless}, \textbf{helpful} and \textbf{compliant}. Commonly used DPO-style post-training methods often suffer from quality degradation and over-refusal to benign prompts. Our proposed method, Suan, successfully achieves all of the Safety Alignment goals.}
    \label{fig:overview}
\end{figure}

To counter these challenges, we take a different approach from standard Direct Alignment Algorithms. Rather than focusing on the modifying the general preference loss function derived from relaxed formulations of the underlying safety-constrained problem, we focus on objective interpretability at the gradient level. Our main contributions can be summarized as follows:
\begin{enumerate} 
    \item We conduct a systematic investigation into the open challenges of safety-focused preference optimization, such as over-refusal and quality degradation, across diverse model families, datasets, and benchmarks.
    \item We propose Suan, a simple, interpretable direct alignment algorithm with robust training dynamics.
    \item Through extensive evaluations, we demonstrate that Suan achieves state-of-the-art performance, simultaneously enhancing safety and reducing over-refusal without degrading response quality.
\end{enumerate}

\section{Theoretical Preliminaries}

\subsection{Post-training of Large Language Models}

Large Language Models are trained as next-token predictors over a discrete vocabulary \(\mathcal{V}\). Given a sequence of tokens \(x_{1:L}=(x_1,\dots,x_L)\), the model defines a conditional distribution $\pi_\theta( \cdot | x_{<l})$ over the next token $x_l$. By applying the chain rule, the joint probability of an entire sequence $x_{1:L}$ under the model parameters $\theta$ is decomposed into the product of the conditional distributions:
\begin{equation}
\pi_\theta(x_{1:L}) = \prod_{l=1}^{L} \pi_\theta(x_l | x_{<l}),
\end{equation}

To transition from general next-token prediction to reliable instruction following, the base model is trained via Supervised Fine-Tuning on a dataset of prompt-response pairs $\mathcal{D}_{\mathrm{SFT}} \doteq \{(x_i, y_i) \mid i = 1, \dots, N\}$. Training is aimed to minimize the standard Cross-Entropy loss:
\begin{equation}
\mathcal{L}_{\mathrm{CE}}(\theta)=\mathbb{E}_{(x,y)\sim\mathcal{D}_{\mathrm{SFT}}}\left[-\log\pi_\theta(y\mid x)\right].
\end{equation}

The subsequent alignment stage relies on a dataset of human (or LLM)-annotated preference pairs, denoted as $\mathcal{D}_{\mathrm{PO}} \doteq \{(x_i, \yp_i, \ym_i)\}_{i=1}^{M}$, where $x_i$ is the prompt, and $\yp_i$ and $\ym_i$ are the preferred and dispreferred responses, respectively. The standard Reinforcement Learning from Human Feedback paradigm addresses this setting in two steps. First, a scalar reward model $r_\phi(x,y)$ is fitted to $\mathcal{D}_{\mathrm{PO}}$. Second, the target policy $\pi_\theta$ is fine-tuned to maximize the expected reward while remaining close to the reference policy $\pi_{\textup{ref}}$ (the SFT model) via a KL-divergence constraint:
\begin{equation}
\label{eq:rlhf_obj}
    \max_\theta \mathbb{E}_{x \sim \mathcal{D}, y \sim \pi_\theta(x)} [r_\phi(x,y)] - \beta D_{\text{KL}} \left(\pi_\theta(y \mid x) \,||\, \pi_{\textup{ref}}(y \mid x)\right),
\end{equation}
where $\beta > 0$ controls the strength of the KL penalty to prevent reward hacking and language degeneration. To solve this optimization problem, one approach is to employ Proximal Policy Optimization (PPO) \cite{schulman_proximal_2017}. Because the expectation in Eq.~\eqref{eq:rlhf_obj} is taken over responses generated directly by the current policy $y \sim \pi_\theta(x)$, PPO operates as an online, model-free policy gradient algorithm. 

Preference optimization alone might not guarantee safety, as users can prefer responses with harmful content. To address this, Safe-RLHF \cite{dai_safe_2023} extends the framework using constrained reinforcement learning. By incorporating a dedicated safety cost model alongside the helpfulness reward, it explicitly constrains the policy to minimize the probability of generating unsafe responses.

In practice, optimizing alignment objectives (Eq.~\eqref{eq:rlhf_obj}) via online policy gradient methods, such as PPO or GRPO \cite{shao_deepseekmath_2024}, requires estimating the per-token KL-divergence over sampled rollout trajectories rather than computing it explicitly over the full sequence space. To this end, several empirical KL estimators are used \cite{schulman_approximating_2020}, each offering distinct trade-offs regarding variance, computational efficiency, and non-negativity guarantees \cite{tang_few_2025, liu_rethinking_2025}.

\subsection{Direct Alignment Algorithms}

Although RLHF pipeline has achieved remarkable success in aligning with human preferences, its complex multi-step nature makes it expensive in terms of computation time and memory usage. Another limitation is dependence on the reward model, which can result in reward hacking and other degeneracies. For example, it has been demonstrated \cite{chen_accuracy_2024} that underfitted reward model is better for models downstream performance. Due to these limitations, Direct Alignment Algorithms have emerged as a powerful tool. 

The pioneering method in that domain is Direct Preference Optimization \cite{rafailov_direct_2023}. It is derived by substituting the optimal policy from the KL-regularized RLHF objective into the Bradley–Terry preference model \cite{bradley_rank_1952}, yielding a closed-form loss that directly trains the policy from preference pairs. Such derivation allows offline training directly on preference data without an explicit reward model, making the process significantly cheaper than RLHF.
\begin{definition}[DPO Loss \cite{rafailov_direct_2023}]
\label{def:dpo}
The training objective of the Direct Preference Optimization method is given by:
\begin{equation}
\label{eq:dpo}
\mathcal{L}_{\mathrm{DPO}}(\theta) = \mathbb{E}_{(x,\yp,\ym)\sim\mathcal{D}_{\mathrm{PO}}} \left[-\log\left(\sigma\left(\beta\log\frac{\pi_\theta(\yp\mid x)}{\pi_\theta(\ym\mid x)} - \beta\log\frac{\pi_{\mathrm{ref}}(\yp\mid x)}{\pi_{\mathrm{ref}}(\ym\mid x)}\right)\right) \right],
\end{equation}
where $\beta \in \mathbb{R}^+$ is a hyperparameter, $\sigma:\mathbb{R}\to[0,1]$ is the sigmoid function, and \(\pi_{\mathrm{ref}}\) is a reference policy.
\end{definition}

Subsequent works have extended the Direct Preference Optimization framework to address its core limitations and strengthen its theoretical foundation. A prominent example is Identity Preference Optimization (IPO) \cite{azar_general_2023}, which regularizes the policy to prevent the implicit reward margin between preferred and dispreferred responses from growing unbounded, thereby mitigating overfitting to offline preference datasets.

\begin{definition}[IPO Loss \cite{azar_general_2023}]
\label{def:ipo}
The training objective of the Identity Preference Optimization method is given by:
\begin{equation}
\label{eq:ipo}
\mathcal{L}_{\mathrm{IPO}}(\theta) = \mathbb{E}_{(x,\yp,\ym)\sim\mathcal{D}_{\mathrm{PO}}} \left[ \left( \log\frac{\pi_\theta(\yp\mid x)}{\pi_\theta(\ym\mid x)} - \log\frac{\pi_{\mathrm{ref}}(\yp\mid x)}{\pi_{\mathrm{ref}}(\ym\mid x)} - \frac{1}{2\kappa} \right)^{2} \right],
\end{equation}
where $\tfrac{1}{2\kappa}$ is a fixed gap target and \(\pi_{\mathrm{ref}}\) is a reference policy.
\end{definition}

Another key development is SafeDPO \cite{kim_safedpo_2025}, which explicitly integrates safety constraints into the alignment process by incorporating safety metadata for response pairs. Formally, the standard preference dataset is augmented with binary safety indicators, yielding tuples $(x, \yp, \ym, \hp, \hm) \sim \mathcal{D}_{\mathrm{SPO}}$, where $\hp, \hm \in \{0, 1\}$ denote the safety status of the preferred ($\yp$) and dispreferred ($\ym$) responses, respectively. During training, the dataset is dynamically filtered: if the preferred response is unsafe while the dispreferred response is safe, the preference order is inverted. Conversely, if both candidates are flagged as unsafe, the tuple is discarded entirely.
\begin{definition}[SafeDPO Loss \cite{kim_safedpo_2025}]
\label{def:SafeDPO}
The training objective of the SafeDPO method is given by:
\begin{equation}
\label{eq:SafeDPO}
\mathcal{L}_{\text{Safe}}(\theta) = \mathbb{E}_{\mathcal{D}_{\mathrm{SPO}}} \left[ -\log\left(\sigma\left(\beta\log\frac{\pi_\theta(\yp\mid x)}{\pi_\theta(\ym\mid x)} - \beta\log\frac{\pi_{\mathrm{ref}}(\yp\mid x)}{\pi_{\mathrm{ref}}(\ym\mid x)}-\Delta(\hm-\hp)\right)\right) \right],
\end{equation}
where $\beta \in \mathbb{R}^+$ and $\Delta \in \mathbb{R}^+$ are hyperparameters and \(\pi_{\mathrm{ref}}\) is a reference policy.
\end{definition}

Despite the algorithmic diversity across Direct Alignment Algorithms \cite{tang_generalized_2024, meng_simpo_2024}, many unified frameworks demonstrate that a majority of these methods optimize a single, generalized objective formulation (Proposition~\ref{prop:general}).
\begin{proposition}[General DAA form \cite{razin_unintentional_2025}]
    \label{prop:general}
    Most of the Direct Alignment Algorithms can be unified under objective of a general form:
    \begin{equation}
    \label{eq:po}
    \mathcal{L}_{\mathrm{PO}}(\theta) = \mathbb{E}_{(x,\yp,\ym)\sim\mathcal{D}_{\mathrm{PO}}} \left[\ell_{x,\yp,\ym}\left(\log\pi_\theta(\yp\mid x) - \log\pi_\theta(\ym\mid x)\right)\right],
    \end{equation}
    where $\ell_{x,\yp,\ym}:\mathbb{R}\to\mathbb{R}^+$, $\ell_{x,\yp,\ym}\in\mathcal{C}^1$.
\end{proposition}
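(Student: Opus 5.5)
The statement is informal ("most of the Direct Alignment Algorithms"), so the plan is to prove it constructively. For each method defined earlier (Definitions~\ref{def:dpo}, \ref{def:ipo} and \ref{def:SafeDPO}) I will write down an explicit per-sample function $\ell_{x,\yp,\ym}$ and check that it maps $\mathbb{R}\to\mathbb{R}^+$ and lies in $\mathcal{C}^1$.

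The key observation is that every $\pi_{\mathrm{ref}}$-dependent term is fixed for a given sample. Set
\[
c_{x,\yp,\ym} \doteq \log\pi_{\mathrm{ref}}(\yp\mid x)-\log\pi_{\mathrm{ref}}(\ym\mid x).
\]
This quantity does not depend on $\theta$. It can therefore be absorbed into the subscript of $\ell$, which is exactly why Eq.~\eqref{eq:po} allows $\ell$ to depend on $(x,\yp,\ym)$. Write $u=\log\pi_\theta(\yp\mid x)-\log\pi_\theta(\ym\mid x)$. The identifications are then:
\begin{itemize}
\item \textbf{DPO:} $\ell_{x,\yp,\ym}(u)=-\log\sigma\bigl(\beta(u-c_{x,\yp,\ym})\bigr)$.
\item \textbf{IPO:} $\ell_{x,\yp,\ym}(u)=\bigl(u-c_{x,\yp,\ym}-\tfrac{1}{2\kappa}\bigr)^2$.
\item \textbf{SafeDPO:} $\ell_{x,\yp,\ym}(u)=-\log\sigma\bigl(\beta(u-c_{x,\yp,\ym})-\Delta(\hm-\hp)\bigr)$.
\end{itemize}

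For SafeDPO there is an extra step, because its expectation is over $\mathcal{D}_{\mathrm{SPO}}$ after filtering. Let $\mathcal{D}_{\mathrm{PO}}$ be the filtered and reordered dataset: swap the pair when $\hp=1,\hm=0$, and drop tuples with $\hp=\hm=1$. The safety labels $\hp,\hm$ are deterministic metadata of each tuple, so the term $\Delta(\hm-\hp)$ is just another per-sample constant inside $\ell$.

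Regularity is checked as follows:
\begin{itemize}
\item $-\log\sigma(z)=\log(1+e^{-z})$ is smooth and strictly positive.
\item A square is smooth and nonnegative.
\item Composing these with affine maps of $u$ preserves both properties.
\end{itemize}
So each $\ell$ is in $\mathcal{C}^1$ and takes values in $\mathbb{R}^+$, read as $[0,\infty)$. To support the word "most", I would add the same one-line identifications for other listed methods where they apply, for example SLiC's hinge-type loss smoothed or taken almost everywhere, and note where a method does not fit.

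The main obstacle is precision, not calculation. "Most" cannot be proved formally, so I would make the claim precise as "every loss that depends on $\theta$ only through the log-ratio difference $u$". A method then fits the form exactly when its per-sample loss factors through $u$, and the proposition reduces to this factorization plus the regularity check. I must flag a genuine exception: length-normalized losses such as SimPO use $\frac{1}{|\yp|}\log\pi_\theta(\yp\mid x)-\frac{1}{|\ym|}\log\pi_\theta(\ym\mid x)$, which is not a function of $u$ when the lengths differ. These fit only under a weighted generalization, and I would mention this as the boundary of the claim rather than force it into Eq.~\eqref{eq:po}.
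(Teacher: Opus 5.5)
Your proposal is correct and takes essentially the same approach as the paper. The paper gives no proof and only cites Razin et al., who justify the claim the way you do: absorb the $\theta$-independent reference log-ratio $\log\pi_{\mathrm{ref}}(\yp\mid x)-\log\pi_{\mathrm{ref}}(\ym\mid x)$ and other per-sample constants such as $1/(2\kappa)$ or $\Delta(\hm-\hp)$ into $\ell_{x,\yp,\ym}$, then check smoothness and nonnegativity for DPO, IPO and SafeDPO. Reading $\mathbb{R}^+$ as $[0,\infty)$, which IPO requires, and flagging length-normalized losses such as SimPO as outside Eq.~\eqref{eq:po} are the right caveats for an informal ``most'' claim.
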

Despite their widespread empirical adoption, Direct Alignment Algorithms conforming to Eq.~\eqref{eq:po} suffer from likelihood displacement of preferred responses, a structural artifact of their gradient formulations. Because the parameter update depends on the relative gradient difference $\nabla_\theta \log \pi_\theta(\yp \mid x) - \nabla_\theta \log \pi_\theta(\ym \mid x)$, optimization can inadvertently shift probability mass away from target preferred completions \cite{razin_unintentional_2025}. In practice, this manifests as a systematic decline in the likelihood of preferred samples, frequently leading to downstream output degradation \cite{pal_smaug_2024, pang_iterative_2024}.

\section{Proposed Method}

The connection of Direct Alignment Algorithms to RLHF, while theoretically sound, imposes a rigid structure with several practical challenges. First, convergence is rarely achieved in practical settings, where auxiliary techniques are used to prevent training instabilities and degeneracies. Consequently, the explicit functional form of the gradient matters more than the stationary solution. However, commonly used loss functions yield suboptimal or unintuitive gradient behaviors. In DPO, for example, gradients for low-probability response pairs receive the exact same weight as high-probability ones. Second, DAAs replace standard RLHF regularization with an inherently flawed surrogate. Because this surrogate yields different minimizers than the true KL divergence \cite{tang_generalized_2024}, it directly enables artifacts such as strong likelihood displacement. Third, some theoretical guarantees \cite{rafailov_r_2024} assume the behavior policy is strictly equal to the reference policy, which does not hold in real-world training. Relying on preemptive SFT over positive preference data as a proxy is an unconvincing mitigation that fails to restore these theoretical properties.

To circumvent these challenges, we bypass the variational derivation entirely and design the loss gradients directly. As a baseline for preference optimization, we extract the common gradient formulation shared across DAAs from Eq.~\eqref{eq:po}. We then introduce a custom weighting scheme designed to prioritize response pairs where the model performs poorly (Proposition~\ref{prop:p_grad}).

\begin{proposition}[Preference Gradient Term]
    \label{prop:p_grad}
    For the dataset $\mathcal{D}_{\mathrm{PO}}=(x,\yp,\ym)$ and the reference model $\pi_{\mathrm{ref}}$ the proposed preference gradient term is
    \begin{equation}
    \label{eq:p_grad}
    \nabla_{\theta}\mathcal{L}_{\mathrm{P}}(\theta) = -\sigma\!\left(\log\frac{\pi_\theta(\ym\mid x)}{\pi_\theta(\yp\mid x)}-\tau\right) \left ( \nabla_\theta \log \pi_\theta(\yp \mid x) - \nabla_\theta \log \pi_\theta(\ym \mid x) \right ), 
    \end{equation}
    where $\tau>0$ controls the preference margin.
\end{proposition}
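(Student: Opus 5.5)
The statement is really a definition of a gradient field, so what needs proving is that it is the gradient of a well-defined scalar loss of the form in Proposition~\ref{prop:general}. I will exhibit the antiderivative and verify it by the chain rule. Set $u_\theta \doteq \log\pi_\theta(\yp\mid x)-\log\pi_\theta(\ym\mid x)$. Then $\log\frac{\pi_\theta(\ym\mid x)}{\pi_\theta(\yp\mid x)}-\tau = -(u_\theta+\tau)$, and the weight in Eq.~\eqref{eq:p_grad} equals $\sigma(-(u_\theta+\tau))$. Eq.~\eqref{eq:p_grad} therefore has the form $\ell'(u_\theta)\,\nabla_\theta u_\theta$ with $\ell'(u) = -\sigma(-(u+\tau))$.

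The candidate is the shifted logistic (DPO-type) loss with $\beta=1$ and reference term replaced by the margin $\tau$:
\begin{equation*}
\ell_{x,\yp,\ym}(u) = -\log\sigma(u+\tau) = \log\bigl(1+e^{-(u+\tau)}\bigr),
\qquad
\mathcal{L}_{\mathrm{P}}(\theta)=\mathbb{E}_{\mathcal{D}_{\mathrm{PO}}}\bigl[\ell_{x,\yp,\ym}(u_\theta)\bigr].
\end{equation*}
The key steps are as follows.
\begin{enumerate}
\item Use $\frac{d}{dz}\log\sigma(z)=1-\sigma(z)=\sigma(-z)$ to get $\ell'(u)=-\sigma(-(u+\tau))$, which matches the required weight.
\item Apply the chain rule pointwise, which gives $\nabla_\theta \ell(u_\theta)=\ell'(u_\theta)\bigl(\nabla_\theta\log\pi_\theta(\yp\mid x)-\nabla_\theta\log\pi_\theta(\ym\mid x)\bigr)$.
\item Exchange gradient and expectation. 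This is immediate for a finite dataset, or for a minibatch empirical mean.
\end{enumerate}
I will also check that $\ell$ maps $\mathbb{R}\to\mathbb{R}^+$ and is $\mathcal{C}^1$. This places $\mathcal{L}_{\mathrm{P}}$ inside the general DAA family of Proposition~\ref{prop:general}, so the gradient-level design is consistent with that framework.

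I will then justify the interpretation attached to the proposition. The weight $\sigma\bigl(\log\frac{\pi_\theta(\ym\mid x)}{\pi_\theta(\yp\mid x)}-\tau\bigr)$ is monotone increasing in the dispreferred-to-preferred likelihood ratio. Consequently:
\begin{itemize}
\item pairs the model ranks wrongly, where $\pi_\theta(\ym\mid x)\gtrsim\pi_\theta(\yp\mid x)$, receive weight near $\sigma(-\tau)$ or larger, approaching $1$;
\item pairs already separated by more than the margin, where $u_\theta\gg\tau$, receive weight decaying like $e^{-(u_\theta+\tau)}$.
\end{itemize}
This is the claimed prioritization of poorly handled pairs, with $\tau$ acting as the margin.

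I expect the main subtlety to be the interpretation rather than the computation. The weight is written without $\pi_{\mathrm{ref}}$, even though the statement mentions a reference model. If a reference-adjusted version were intended, I would replace $u_\theta$ by $u_\theta-\log\frac{\pi_{\mathrm{ref}}(\yp\mid x)}{\pi_{\mathrm{ref}}(\ym\mid x)}$. Because that quantity is constant in $\theta$, the same antiderivative argument goes through unchanged.
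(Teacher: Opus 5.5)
Your proposal is correct and follows essentially the same route as the paper. The paper's appendix proof of Corollary~\ref{cor:grad_to_loss} sets $t(\theta)=\log\pi_\theta(\yp\mid x)-\log\pi_\theta(\ym\mid x)+\tau$, reads Eq.~\eqref{eq:p_grad} as $-\sigma(-t(\theta))\nabla_\theta t(\theta)$, and integrates with $\frac{d}{dt}[-\log\sigma(t)]=-\sigma(-t)$ to get $\mathcal{L}_{\mathrm{P}}=-\log\sigma(t(\theta))$, which is your antiderivative; your extra checks (positivity and $\mathcal{C}^1$ of $\ell$, hence membership in the family of Proposition~\ref{prop:general}, and monotonicity of the weight) go beyond the paper but are sound, and you state the antiderivative correctly where the paper's text briefly writes $-\log t(\theta)$.
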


Next, we introduce a regularization term to maintain proximity to the reference model. Standard approaches penalize divergence using a KL divergence estimator, typically the $k_2$ estimator. However, unlike the bounded preference gradient, the $k_2$ estimator's gradient coefficient scales linearly with $\log\pi_\theta$, allowing the regularization term to easily dominate training dynamics. Given that this objective serves as an empirical surrogate rather than an exact KL gradient estimator in practice, we rescale it to ensure stability (Proposition~\ref{prop:r_grad}).

\begin{proposition}[Regularization Gradient Term]
    \label{prop:r_grad}
    For the dataset $\mathcal{D}_{\mathrm{PO}}=(x,\yp,\ym)$ and the reference model $\pi_{\mathrm{ref}}$ the regularization gradient term is 
    \begin{equation}
    \label{eq:r_grad}
    \nabla_{\theta}\mathcal{L}_{\mathrm{R}}(\theta) = \sum_{y \in \{\yp, \ym\}} \left(\sigma\left(\log\frac{\pi_\theta(y\mid x)}{\pi_\mathrm{ref}(y\mid x)} \right) - \frac{1}{2}\right) \nabla_{\theta} \log \pi_\theta(y \mid x).
    \end{equation}
\end{proposition}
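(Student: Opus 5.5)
The statement prescribes the gradient directly, so what needs proving is that Eq.~\eqref{eq:r_grad} is the gradient of a well-defined scalar regularizer with the properties claimed in the surrounding text. Those properties are: it vanishes exactly when $\pi_\theta=\pi_{\mathrm{ref}}$ on the sampled responses, its coefficient is bounded unlike that of $k_2$, and it agrees with the $k_2$ estimator up to rescaling near the reference. The plan is to exhibit an explicit potential and verify each of these in turn.

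\textbf{Step 1: a candidate potential.} Write $r_y(\theta)=\log\pi_\theta(y\mid x)-\log\pi_{\mathrm{ref}}(y\mid x)$ for $y\in\{\yp,\ym\}$, and define
\begin{equation*}
\mathcal{L}_{\mathrm{R}}(\theta)=\mathbb{E}_{\mathcal{D}_{\mathrm{PO}}}\Big[\sum_{y\in\{\yp,\ym\}} \phi\big(r_y(\theta)\big)\Big],\qquad \phi(r)=\log\!\big(1+e^{r}\big)-\tfrac{r}{2}=\log\!\big(2\cosh(r/2)\big).
\end{equation*}
Since $\pi_{\mathrm{ref}}$ does not depend on $\theta$, we have $\nabla_\theta r_y=\nabla_\theta\log\pi_\theta(y\mid x)$. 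Because $\phi'(r)=\sigma(r)-\tfrac12$, the chain rule gives Eq.~\eqref{eq:r_grad} termwise. Interchanging the gradient with the expectation over the finite dataset is immediate.

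\textbf{Step 2: the regularizer behaves as claimed.} I would verify the following properties of $\phi$, all by elementary calculus.
\begin{itemize}
\item $\phi$ is even, since $\log\cosh$ is even.
\item $\phi$ is strictly convex, because $\phi''(r)=\sigma(r)(1-\sigma(r))>0$.
\item $\phi$ has its unique minimum at $r=0$. Hence the regularization gradient vanishes exactly when each sampled response has its reference likelihood, and otherwise it pushes $\log\pi_\theta(y\mid x)$ back toward $\log\pi_{\mathrm{ref}}(y\mid x)$ from either side.
\item The coefficient is bounded, $|\sigma(r)-\tfrac12|<\tfrac12$. This is the boundedness the text contrasts with the $k_2$ estimator.
\end{itemize}

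\textbf{Step 3: the link to $k_2$.} Recall that $k_2=\tfrac12 r^2$ has gradient $r\,\nabla_\theta\log\pi_\theta$, with a coefficient that grows linearly in $r$. Taylor expansion gives $\sigma(r)-\tfrac12=\tfrac r4+O(r^3)$, so near the reference the term equals $\tfrac14\nabla k_2$ up to third order. Away from the reference it saturates like $\tfrac12\operatorname{sign}(r)$. This makes precise the phrase ``rescale it to ensure stability''.

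The main obstacle is interpretive rather than technical. The proposition asserts a definition, so the substantive content is the existence of a potential and its qualitative properties. The one point requiring care is that the expectation in an offline surrogate is taken over dataset responses rather than samples from $\pi_\theta$. Consequently $\mathcal{L}_{\mathrm{R}}$ is not a KL estimator in the on-policy sense, and I would state the result only as a property of this surrogate objective.
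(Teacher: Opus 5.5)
Your proposal is correct and is essentially the paper's own argument (the proof of Corollary~\ref{cor:grad_to_loss} in Appendix~\ref{appendix:proofs}). Both exhibit the potential $\log\bigl(2\cosh(r_y/2)\bigr)$ and recover the coefficient $\sigma(r_y)-\tfrac12$ by the chain rule with $\nabla_\theta r_y=\nabla_\theta\log\pi_\theta(y\mid x)$; you write the potential as $\log(1+e^{r})-\tfrac r2$, whereas the paper uses $\tanh(t)=2\sigma(2t)-1$ and then drops the constant $\log 2$. Your Steps~2--3 add checks that the paper only asserts informally, with one correction: ``the regularization gradient vanishes exactly when'' holds for the coefficients $\sigma(r_y)-\tfrac12$ (equivalently, $r=0$ is the unique minimizer of $\phi$), not for $\nabla_\theta\mathcal{L}_{\mathrm{R}}$ itself, which can also vanish through cancellation among the score vectors $\nabla_\theta\log\pi_\theta(y\mid x)$.
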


Conveniently, both gradient terms correspond to simple, closed-form loss functions, as follows from Corollary~\ref{cor:grad_to_loss} (proof is given in Appendix~\ref{cor_appendix:grad_to_loss}).

\begin{corollary}
\label{cor:grad_to_loss}
Both $\nabla_\theta \mathcal{L}_{\mathrm{P}}$ and $\nabla_\theta \mathcal{L}_{\mathrm{R}}$ admit loss forms:
\begin{equation}
\label{eq:p_loss}
\mathcal{L}_{\mathrm{P}}(\theta) =  -\log\sigma\!\left(\log\pi_\theta(\yp\mid x)-\log\pi_\theta(\ym\mid x)+\tau\right), 
\end{equation}
\begin{equation}
\label{eq:r_loss}
\mathcal{L}_{\mathrm{R}}(\theta) =\sum_{y\in\{\yp,\ym\}} \log \Biggl(\cosh\Biggl(\frac12 \log \frac{\pi_{\theta}(y\mid x)}{\pi_{\mathrm{ref}}(y\mid x)}\Biggr)\Biggr).
\end{equation}
\end{corollary}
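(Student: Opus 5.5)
The plan is to differentiate each candidate loss in Corollary~\ref{cor:grad_to_loss} with respect to $\theta$ using the chain rule, treating $\pi_{\mathrm{ref}}$ and $\tau$ as constants. We then check that the results coincide with Eq.~\eqref{eq:p_grad} and Eq.~\eqref{eq:r_grad}. Since a loss is determined by its gradient only up to an additive constant, exhibiting one antiderivative suffices to show that each gradient field admits a loss form. The whole argument reduces to two scalar identities:
\[
\frac{d}{dz}\bigl(-\log\sigma(z)\bigr) = -\bigl(1-\sigma(z)\bigr) = -\sigma(-z),
\qquad
\frac{d}{du}\log\cosh\bigl(\tfrac{u}{2}\bigr) = \tfrac12\tanh\bigl(\tfrac u2\bigr) = \sigma(u)-\tfrac12 .
\]
The last equality is the standard relation $\tanh(u/2) = 2\sigma(u)-1$.

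\textbf{Preference term.} Set $z(\theta) = \log\pi_\theta(\yp\mid x)-\log\pi_\theta(\ym\mid x)+\tau$, so that
\[
\nabla_\theta z = \nabla_\theta\log\pi_\theta(\yp\mid x)-\nabla_\theta\log\pi_\theta(\ym\mid x).
\]
By the chain rule, $\nabla_\theta\mathcal{L}_{\mathrm{P}} = -\sigma(-z)\,\nabla_\theta z$. Moreover,
\[
-z = \log\frac{\pi_\theta(\ym\mid x)}{\pi_\theta(\yp\mid x)}-\tau,
\]
which is exactly the argument of the sigmoid weight in Eq.~\eqref{eq:p_grad}. The expression therefore matches Eq.~\eqref{eq:p_grad} term by term. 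This also exhibits $\mathcal{L}_{\mathrm{P}}$ as an instance of the general form of Proposition~\ref{prop:general}, with $\ell(t) = -\log\sigma(t+\tau)$, which is $\mathcal{C}^1$ and positive.

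\textbf{Regularization term.} For each $y\in\{\yp,\ym\}$, set
\[
u_y(\theta) = \log\pi_\theta(y\mid x)-\log\pi_{\mathrm{ref}}(y\mid x).
\]
Since the reference model is frozen, $\nabla_\theta u_y = \nabla_\theta\log\pi_\theta(y\mid x)$. The second identity then gives
\[
\nabla_\theta\log\cosh\bigl(\tfrac12 u_y\bigr) = \Bigl(\sigma(u_y)-\tfrac12\Bigr)\nabla_\theta\log\pi_\theta(y\mid x).
\]
Summing over $y$ yields Eq.~\eqref{eq:r_grad}.

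\textbf{Remaining points.} The only step requiring care is the hyperbolic identity $\tfrac12\tanh(u/2)=\sigma(u)-\tfrac12$. It can be verified directly by writing
\[
\tanh\bigl(\tfrac u2\bigr) = \frac{e^{u}-1}{e^{u}+1} = 2\sigma(u)-1 .
\]
We should also note that both losses are smooth, so the gradients exist everywhere and no further integrability condition on the gradient fields is needed. As a consistency remark, $\log\cosh(u/2)\approx u^2/8$ near $u=0$, so the regularizer behaves like a rescaled $k_2$-type penalty there. Its gradient coefficient, however, is bounded in $(-\tfrac12,\tfrac12)$, which is the rescaling motivated before Proposition~\ref{prop:r_grad}.
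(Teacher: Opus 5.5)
Your proposal is correct and follows essentially the same route as the paper: set $z=\log\pi_\theta(\yp\mid x)-\log\pi_\theta(\ym\mid x)+\tau$ and $u_y=\log\frac{\pi_\theta(y\mid x)}{\pi_{\mathrm{ref}}(y\mid x)}$, then use $\frac{d}{dz}[-\log\sigma(z)]=-\sigma(-z)$ and $\tanh(u/2)=2\sigma(u)-1$ to confirm that the candidate losses differentiate to Eqs.~\eqref{eq:p_grad} and \eqref{eq:r_grad}. The differences are cosmetic: the paper writes the antiderivative as $\log\bigl(2\cosh(\lambda/2)\bigr)$ and then drops the constant $\log 2$, and its stated antiderivative for the preference term contains a typo ($-\log t(\theta)$ instead of $-\log\sigma(t(\theta))$), which your version avoids.
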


Taken together we can now define our proposed training objective.

\begin{definition}[Suan objective]
    \label{def:suan}
    For the dataset $\mathcal{D}_{\mathrm{PO}}=(x,\yp,\ym)$ and the reference model $\pi_{\mathrm{ref}}$ we define Suan training objective $\mathcal{L}_{\mathrm{Suan}}(\theta)$ as a linear combination of $\mathcal{L}_{\mathrm{P}}(\theta)$ and $\mathcal{L}_{\mathrm{R}}(\theta)$:
    \begin{equation}
    \label{eq:suan}
    \mathcal{L}_{\mathrm{Suan}}(\theta) = \mathcal{L}_{\mathrm{P}}(\theta) + \beta\mathcal{L}_{\mathrm{R}}(\theta),
    \end{equation}
    where parameter $\beta$ controls regularization strength.
\end{definition}

As a result, the design of Suan is highly practical, suppressing over-refusal by reducing over-optimization on already-correct pairs while preserving output quality by regularizing likelihoods toward the reference model.

\section{Experimental Evaluation}

\subsection{Models}

For a robust and comprehensive evaluation, our experimental design incorporates a diverse cross-section of open-source model families and parameter sizes. Specifically, we selected widely used families of open source large language models such as Mistral-12B \cite{mistral_ai_mistral_2024}, Falcon3-7B \cite{almazrouei_falcon_2023}, Llama-3.1-8B \cite{grattafiori_llama_2024}, Gemma-2-9B \cite{gemma_team_gemma_2024}, Qwen-3 \cite{yang_qwen3_2025}, Yi-1.5-9B \cite{ai_yi_2024}, DeepSeek-7B \cite{deepseek-ai_deepseek_2024}, OLMo-3-7B \cite{olmo_olmo_2025}. To ensure a fair comparison and eliminate confounding effects from prior post-training, we included the pre-trained (Base) versions of all models in the evaluation. To establish instruction-following capabilities, we first perform SFT on the Alpaca instruction dataset \cite{taori_stanford_2023}, a collection of diverse instructions paired with demonstrations. For subsequent preference optimization we opted for a high-quality preference dataset PKU-Safe-RLHF \cite{ji_beavertails_2023}, using the SFT models as the references. Additional model and training details are available in the Appendix~\ref{appendix:details}.

\subsection{Benchmarks}

To execute a comprehensive suite of the experiments, we evaluate our model across different tasks including harmlessness as safety, compliance as over-refusal, and helpfulness as the quality of the model outputs through instruction-following, reasoning and creative writing benchmarks. To evaluate the model's robustness to adversarial and harmful instructions we used four commonly used red-teaming benchmarks: Malicious Instruct \cite{huang_catastrophic_2023}, HarmBench \cite{mazeika_harmbench_2024}, AdvBench \cite{zou_universal_2023}, and SORRY-Bench \cite{xie_sorry-bench_2024}. Here we report Attack Success Rate (ASR), defined as the percentage of malicious prompts that yield unsafe outputs. Additionally, to assess over-refusal, we evaluated the models on standard XS-Test \cite{rottger_xstest_2023} and OR-Bench \cite{cui_or-bench_2024}, datasets containing a collection of benign instructions with the vocabulary, which may qualify as malicious. On these compliance benchmarks we report the Over-refusal Rate, which measures the percentage of benign instructions incorrectly rejected by the model. To confirm that the generated responses are actually helpful, we focused on diverse set of standard instruction-following benchmarks such as AlpacaEval \cite{taori_stanford_2023}, a collection from AlpacaFarm \cite{dubois_alpacafarm_2023}, MT-Bench \cite{bai_mt-bench-101_2024}, and ArenaHard \cite{li_crowdsourced_2024}. Here we employ an LLM-as-a-judge setup \cite{liu_g-eval_2023} to score outputs on helpfulness, usefulness, and relevance. For the additional details on the used benchmarks see Appendix~\ref{appendix:details}.

\begin{figure}
    \centering
    \includegraphics[width=\linewidth]{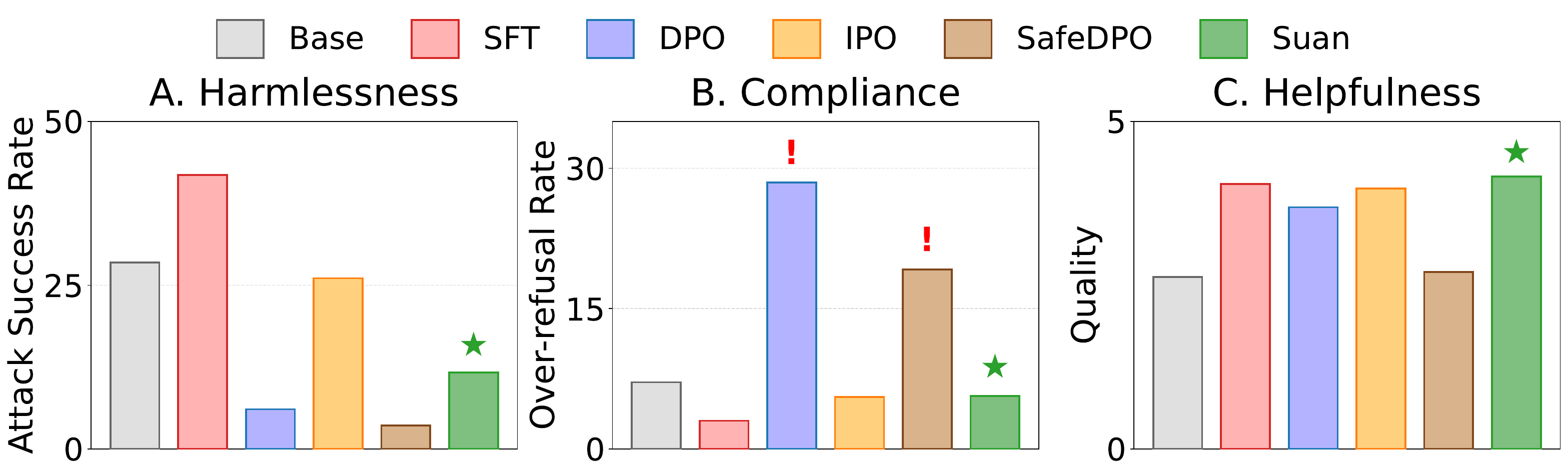}
    \caption{Comparison of safety alignment methods across diverse set of benchmarks. Our method Suan (highlighted by $\star$) simultaneously achieves harmlessness, compliance and helpfulness. \textbf{(A)}. The Attack Success Rate (0-100) on safety benchmarks. Lower scores are better. \textbf{(B)}. The Over-Refusal Rate (0-100) on compliance benchmarks. Lower scores are better. The catastrophic over-refusal is highlighted by the exclamation mark. \textbf{(C)}. The Quality scores (1-5) on utility benchmarks judged by LLM as a Judge. Higher scores are better.}
    \label{fig:main_performance}
\end{figure}

\section{Results}

We compare our method, Suan, against DPO, IPO, and SafeDPO. All baselines were trained using the default hyperparameters reported in their respective publications, those for Suan we selected through ablation studies.

\paragraph{Safety Alignment}

We first report aggregated evaluations across all base models and evaluation suits in Figure~\ref{fig:main_performance}. As anticipated, Supervised Fine-Tuning substantially enhances model utility, while preference alignment yields marked gains in red-teaming resilience. Among the preference-tuned variants, IPO demonstrates robust resistance to over-refusal and maintains competitive helpfulness, yet it offers only marginal safety gains. Conversely, while both DPO and SafeDPO achieve top-tier harmlessness scores, these gains incur clear trade-offs: elevated over-refusal rates and compromised output quality (Figure~\ref{fig:main_performance}, panels B and C). 

Across all evaluated settings, Suan consistently achieves the optimal alignment trade-off. It significantly outperforms SFT and IPO in risk mitigation, closing the gap with DPO's strong harmlessness without sacrificing helpfulness or compliance.

\paragraph{Likelihood Displacement}

In addition, we experimentally investigate whether Suan suffers from likelihood displacement (Figure~\ref{fig:displace}). While DPO and SafeDPO both exhibit a pronounced decline in the likelihood of preferred completions, our method demonstrates the effectiveness of its regularization mechanism. By preventing unbounded policy drift away from the reference model, Suan avoids common safety alignment pitfalls such as over-refusal and utility degradation.
\begin{figure}
    \centering
    \includegraphics[width=0.6\linewidth]{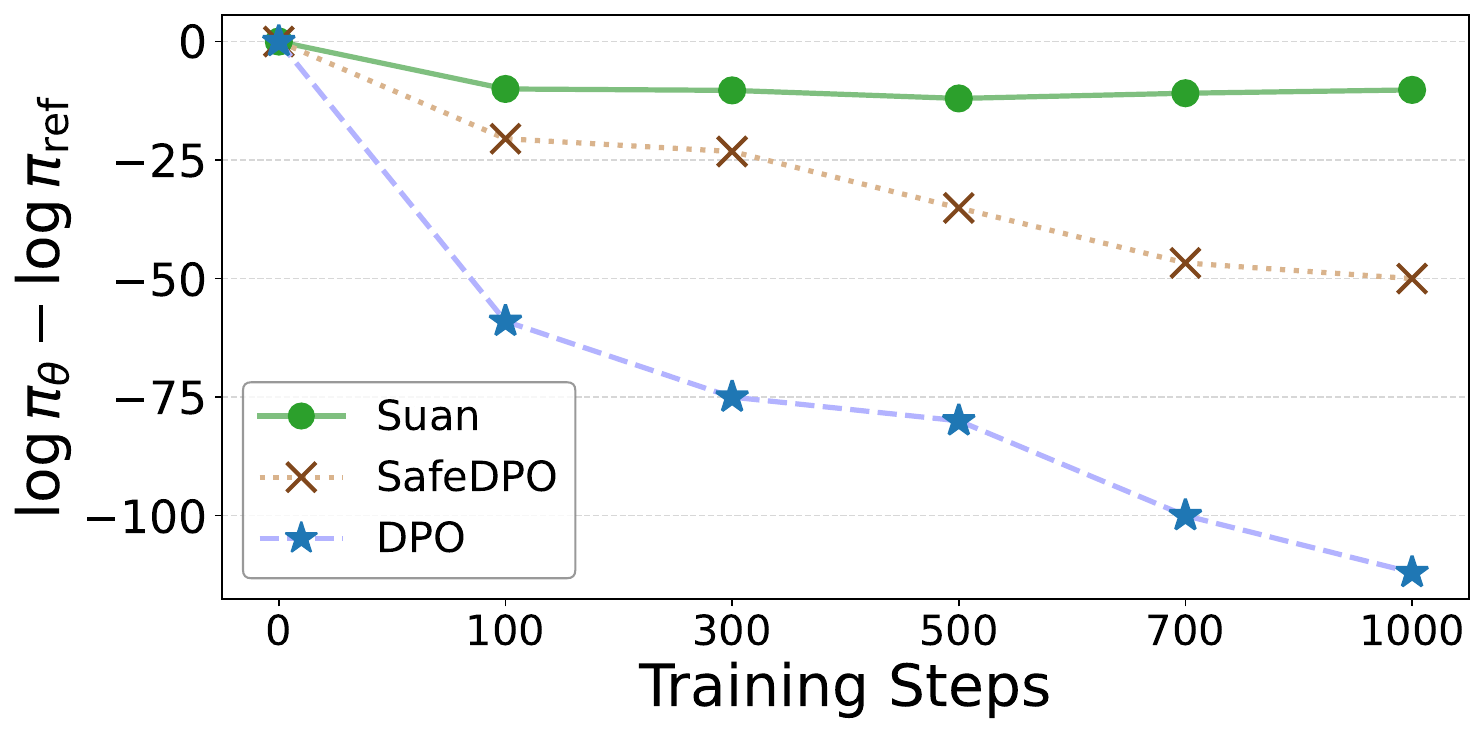}
    \caption{The evolution of preferred responses likelihood displacement from the reference model during fine-tuning of Gemma-2 with DPO, SafeDPO and Suan.}
    \label{fig:displace}
\end{figure}

\paragraph{Ablation study}

To study the individual effects of Suan's $\tau$ and $\beta$ hyperparameters, we tuned them sequentially, first establishing $\tau = 1$ as optimal based on helpfulness, harmlessness, and compliance (Appendix~\ref{appendix:ablations}). Next, keeping $\tau = 1$ fixed, we systematically varied $\beta$. We evaluate performance across these $\beta$ values alongside DPO and SafeDPO under identical $\beta$ sweeps (Figure~\ref{fig:beta_abl}). 

One can notice that at the lowest setting ($\beta = 0.01$), DPO suffers a severe quality collapse. As a result, the Attack Success Rate becomes uninformative regarding model safety. This non-monotonic behavior aligns with \cite{liu_understanding_2024}, who showed that DPO performance degrades at both extreme low ($\beta \approx 0.01$) and high ($\beta \ge 1$) values, peaking around $\beta = 0.1$. Moreover, varying $\beta$ in DPO and SafeDPO does not predictably constrain drift from the reference model. This corroborates theoretical findings by \cite{azar_general_2023}, who demonstrated that DPO's implicit regularization dynamics diverge from standard KL divergence.

In contrast, the hyperparameter $\beta$ in Suan acts in a straightforward and highly interpretable manner. As shown across all panels in Figure~\ref{fig:beta_abl}, increasing $\beta$ monotonically enforces closer alignment with the reference model $\pi_{\mathrm{ref}}$. Consequently, Suan maintains robust performance across all tested values of $\beta$, consistently outperforming DPO and SafeDPO in both compliance and helpfulness.

\begin{figure}
    \centering
    \includegraphics[width=\linewidth]{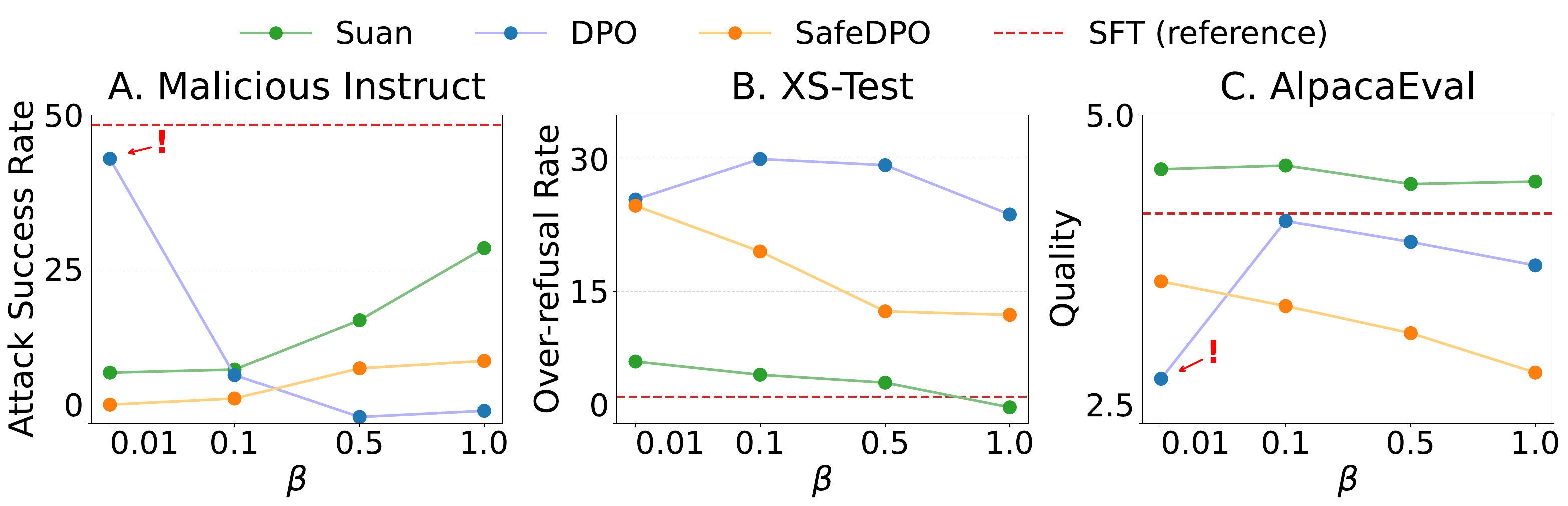}
    \caption{Comparison of DPO, SafeDPO and Suan across different $\beta$. Experiments are conducted on Llama-3.1 model.
    DPO safety and quality degeneration at $\beta=0.01$ is marked with exclamation mark. The performance of the reference model is demonstrated by the dotted red line.}
    \label{fig:beta_abl}
\end{figure}

\section{Conclusions}

We present Suan, a simple yet effective preference alignment algorithm designed to foster harmless, helpful, and compliant language model behavior. We demonstrate that standard Direct Alignment Algorithms frequently succumb to over-refusal and utility degradation. Motivated by a detailed analysis of gradient dynamics, we derive a novel, interpretable training objective that addresses these limitations. Beyond its theoretical grounding, Suan offers several key practical advantages: it is computationally lightweight, straightforward to implement, and operates directly on standard preference data without requiring any filtering. Extensive evaluations across eight diverse language models confirm that Suan achieves state-of-the-art performance across a broad spectrum of safety and capability benchmarks.

Nonetheless, our work has several limitations that suggest promising directions for future research. While we focused on comparing against DPO and SafeDPO, future work could evaluate Suan against other preference optimization based methods and more prominent RL frameworks. Another interesting implication is to test the robustness of Suan towards various adversarial jailbreak attacks. Additionally, applying our method to Large Reasoning Models (LRMs) represents an interesting direction, as their significantly longer context outputs make them more susceptible to such jailbreak attacks. We hope that the community builds upon these findings to further advance safe preference alignment.

\paragraph{Reproducibility Statement}
The code is available at \url{https://github.com/powidla/Suan}. The data is available at \url{https://huggingface.co/SUAN-PO}.

\section*{Impact Statement}

This work aims to advance the field of Machine Learning by simultaneously addressing red-teaming and helpfulness in post-training of Large Language Models. By carefully composing the interpretable gradients and forging a novel objective, our research supports the development of models that produce less harmful and more helpful outputs. There are many potential societal consequences of improving generative variety, none of which we feel must be specifically highlighted here beyond the ethical considerations standard to the advancement of language modeling.

\begin{ack}
The authors acknowledge the National Academic Infrastructure for Supercomputing in Sweden (NAISS) for granting this project access to high-performance clusters. The computations and data handling were enabled by Arrhenius, provided by Linköping University, and Berzelius, provided by the Knut and Alice Wallenberg Foundation at the National Supercomputer Centre by the National Academic Infrastructure for Supercomputing in Sweden (NAISS). During this project, O.C. was supported by the SciLifeLab \& Wallenberg Data Driven Life Science Program (a DDLS Academic PhD grant to Eric Libby and Laura Michelle Carroll).
\end{ack}

\bibliographystyle{unsrtnat}
\bibliography{references}

\newpage
\section*{Appendix}
\appendix

\counterwithin{figure}{section}
\counterwithin{table}{section}
\counterwithin{equation}{section}

\renewcommand{\thefigure}{\Alph{section}.\arabic{figure}}
\renewcommand{\thetable}{\Alph{section}.\arabic{table}}
\renewcommand{\theequation}{\Alph{section}.\arabic{equation}}

\section{Omitted Proofs}
\label{appendix:proofs}

\begin{corollary}[Corollary \ref{cor:grad_to_loss}]
\label{cor_appendix:grad_to_loss}
Both $\nabla_\theta \mathcal{L}_{\mathrm{P}}$ and $\nabla_\theta \mathcal{L}_{\mathrm{R}}$ admit loss forms:
\begin{equation}
\label{app_eq:p_loss}
\mathcal{L}_{\mathrm{P}}(\theta) =  -\log\sigma\!\left(\log\pi_\theta(\yp\mid x)-\log\pi_\theta(\ym\mid x)+\tau\right)  
\end{equation}
\begin{equation}
\label{app_eq:r_loss}
\begin{aligned}
\mathcal{L}_{\mathrm{R}}(\theta)
=\sum_{y\in\{\yp,\ym\}}
\log \Biggl(\cosh\Biggl(\frac12 \log \frac{\pi_{\theta}(y\mid x)}{\pi_{\mathrm{ref}}(y\mid x)}\Biggr)\Biggr)
\end{aligned}
\end{equation}
\begin{proof}
Recall two helpful properties of the sigmoid function
\begin{equation}
\label{app_eq:sigmoid_utils}
\frac{d}{dt}\big[-\log\sigma(t)\big] = -(1-\sigma(t)) = -\sigma(-t),
\qquad
\tanh(t) = 2\sigma(2t)-1.
\end{equation}
The second identity follows directly from the definition of sigmoid
\begin{equation}
2\sigma(2t)-1 = \frac{2}{1+e^{-2t}}-1 = \frac{1-e^{-2t}}{1+e^{-2t}} = \tanh(t).
\end{equation}
Recall the gradients of Suan
\begin{equation}
\label{app_eq:pref_grad}
\nabla_{\theta}\mathcal{L}_{\mathrm{P}}(\theta) = -\sigma\!\left(\log\frac{\pi_\theta(\ym\mid x)}{\pi_\theta(\yp\mid x)}-\tau\right) \left ( \nabla_\theta \log \pi_\theta(\yp \mid x) - \nabla_\theta \log \pi_\theta(\ym \mid x) \right ), 
\end{equation}
\begin{equation}
\label{app_eq:reg_grad}
\nabla_{\theta}\mathcal{L}_{\mathrm{R}}(\theta) = \sum_{y \in \{\yp, \ym\}} \left(\sigma\left(\log\frac{\pi_\theta(y\mid x)}{\pi_\mathrm{ref}(y\mid x)} \right) - \frac{1}{2}\right) \nabla_{\theta} \log \pi_\theta(y \mid x).
\end{equation}
Let $t(\theta) = \log\pi_\theta(\yp\mid x) - \log\pi_\theta(\ym\mid x) + \tau$, so that $\nabla_\theta \mathcal{L}_{\mathrm P}(\theta)
= -\sigma(-t(\theta))\,\nabla_\theta t(\theta)$ coincides with the Eq.~\ref{app_eq:pref_grad}.
Then by the first property outlined in Eq.~\ref{app_eq:sigmoid_utils} the antiderivative is  $\mathcal{L}_{\mathrm P}(\theta)=-\log t(\theta)$, which immediately gives
\begin{equation}
\mathcal{L}_{\mathrm{P}}(\theta) =  -\log\sigma\!\left(\log\pi_\theta(\yp\mid x)-\log\pi_\theta(\ym\mid x)+\tau\right) 
\end{equation}
For $y\in\{\yp,\ym\}$ denote
\begin{equation}
\lambda(y) = \log\frac{\pi_\theta(y\mid x)}{\pi_{\mathrm{ref}}(y\mid x)} ,
\quad \text{such that} \quad
\nabla_{\theta}\lambda(y) = \nabla_{\theta}\log\frac{\pi_\theta(y\mid x)}{\pi_{\mathrm{ref}}(y\mid x)}= \nabla_{\theta}\log\pi_\theta(y\mid x).
\end{equation}
Now by substituting $\lambda(y)$ into Eq.~\ref{app_eq:reg_grad} and using the second property from Eq.~\ref{app_eq:sigmoid_utils} we get
\begin{equation}
\nabla_{\theta}\mathcal{L}_{\mathrm{R}}(\theta) = \sum_{y \in \{\yp, \ym\}} \left(\sigma\left(\lambda(y) \right) - \frac{1}{2}\right) \nabla_{\theta}\lambda(y)=\frac{1}{2}\sum_{y \in \{\yp, \ym\}} \tanh{\frac{\lambda(y)}{2}}\nabla_{\theta}\lambda(y)
\end{equation}
Consider the function $g(y)=\log(2\cosh{\frac{\lambda}{2}})$ with the gradient calculated via chain rule $\nabla_\theta g(y)=\frac{1}{2}\tanh{\frac{\lambda(y)}{2}}\nabla_{\theta}\lambda(y)$. The function $g(y)$ is the antiderivative for Eq.~\ref{app_eq:reg_grad} and, therefore, the loss has the following closed-form 
\begin{equation}
\mathcal{L}_{\mathrm{R}}(\theta)
=\sum_{y\in\{\yp,\ym\}}
\log \Biggl(\cosh\Biggl(\frac12 \log \frac{\pi_{\theta}(y\mid x)}{\pi_{\mathrm{ref}}(y\mid x)}\Biggr)\Biggr),
\end{equation}
where we disregarded the constant term $\log2$, which vanishes during the gradient computation. This proves both loss forms.
\end{proof}
\end{corollary}

\section{Experimental Details}
\label{appendix:details}

In this section we provide comprehensive descriptions for the models, benchmarks and datasets used in our study. All the experiments were performed on a single NVIDIA GH200 96 GB GPU.

\subsection{Models}

\paragraph{Mistral-12B}
Mistral NeMo \cite{mistral_ai_mistral_2024} is trained jointly by Mistral AI and NVIDIA. It is designed for diverse tasks including text generation and instruction following. This model is released under Apache 2.0 license.

\paragraph{Falcon-3-7B}
Falcon-3-7B \cite{almazrouei_falcon_2023} from the  Technology Innovation Institute, trained on a diverse high-quality corpora predominantly assembled
from web data. This model is released under Falcon LLM license.

\paragraph{Llama-3.1-8B}
Llama-3.1 \cite{grattafiori_llama_2024} is released by Meta AI as an extension of the Llama-3 series. The model serves as a strong foundation for downstream fine-tuning and alignment methods, making it widely adopted in both research and applied settings. This model contains custom Llama-3.1 license \footnote{\url{https://github.com/meta-llama/llama-models/blob/main/models/llama3_1/LICENSE}}.

\paragraph{Gemma-2-9B}
Gemma-2 \cite{gemma_team_gemma_2024} is a model family developed by Google, focusing on efficiency and strong reasoning capabilities under limited parameter budgets. It is trained using a carefully curated dataset that emphasizes high-quality, synthetic, and textbook-style data. This model is released under Google's custom Gemma license.

\paragraph{Qwen-3-8B}
Qwen3 \cite{yang_qwen3_2025} is a family of large language models developed by Alibaba Cloud, designed to support general-purpose language understanding, reasoning, and instruction-following tasks. The model is trained on a diverse mixture of web, code, and domain-specific data. This model is released under Apache 2.0 license.

\paragraph{Yi-1.5-9B}
Yi-1.5-9B \cite{ai_yi_2024} is a bilingual language model developed by 01.AI, trained on 3 trillion tokens multilingual corpus. This model is released Apache 2.0 license.

\paragraph{DeepSeek-7B}
DeepSeek-7B \cite{deepseek-ai_deepseek_2024} is a series of models developed by DeepSeek, trained from scratch on a vast dataset of 2 trillion tokens in both English and Chinese. This model is released under MIT license.

\paragraph{OLMo-3-7B}
OLMo-3-7B \cite{olmo_olmo_2025} a family of state-of-the-art, fully-open language and thinking models developed by Allen Institute for AI. It is trained on Dolma 3 dataset consisting of more than 6 trillion tokens. This model is licensed under Apache 2.0

\subsection{SFT Dataset}
Alpaca\footnote{\url{https://huggingface.co/datasets/tatsu-lab/alpaca}} \cite{taori_stanford_2023} is a widely used instruction-following dataset consisting of approximately 52K instruction–response pairs generated using a self-instruct framework. The dataset covers a broad range of tasks, including question answering, summarization, reasoning, and creative writing. To preprocess the Alpaca dataset, we filter and format each example into prompt–completion pairs. We use explicit delimiters for the instruction, input, and response to provide structural context for the sequence. The model is trained to generate the response following an opening delimiter and is explicitly required to produce a matching delimiter to signal completion. Alpaca is available under CC-BY-NC-4.0 license.

\subsection{DPO Datasets}
\paragraph{HH-RLHF} HH-RLHF \cite{bai_training_2022} is a preference dataset consisting of approximately 160K chosen/rejected pairs that include discriminatory language and discussions of abuse, violence, self-harm, exploitation, and other potentially upsetting subject matter. To preprocess the HH dataset, we filter 50K pairs and format them into suitable prompt-chosen/rejected couplings. HH-RLHF is available under MIT license.
\paragraph{PKU-SafeRLHF-30K} PKU-SafeRLHF \cite{ji_beavertails_2023} is a safety preference dataset that contains approximately 27K samples. The dataset contains prompt, two different responses and boolean labels for each response telling if this response is safe. 
To make this dataset compatible with DPO formatting, we select safe and unsafe pairs separately based on boolean indicators and construct standard prompt–preferred/dispreferred pairs.
Going further, to meet SafeDPO formatting style \cite{kim_safedpo_2025}, we restructure each example into prompt–chosen/rejected pairs with two additional binary indicators, which identify if preferred/dispreferred response is safe. PKU-SafeRLHF is available under CC-BY-NC-4.0 license.

\subsection{SFT and DPO details}
\label{appendix:train_details}

Given the limitation of our computational resources, we performed 4-bit NormalFloat quantization of selected models and utilized the Quantized Low Rank Adaptation \cite{dettmers_qlora_2023} technique, which applies Low-Rank Adaptation \cite{hu_lora_2021}. This approach significantly reduces memory footprint and accelerates training without loss in performance. On top of that, we used gradient accumulation to increase the total batch size. 

For both SFT and DPO, our models were trained for a single epoch using a linear learning rate schedule with a peak of $2\times10^{-4}$ and 50 warmup steps. We employed a batch size of 2 with 4 gradient accumulation steps and a weight decay of 0.01. For the LoRA adapter, we set $r=16$ and $\alpha=16$. More details are better understand from the accompanying code repository.

\subsection{Evaluation datasets}

\textbf{Alpaca Eval} is an evaluation set, which tests the ability of models to follow general user instructions \cite{dubois_alpacafarm_2023}. In our experiments we use the \textit{helpful\_base} subset of the AlpacaFarm \cite{dubois_alpacafarm_2023} Hugging Face repository, comprising 129 prompts. We take only this portion of the original dataset to ensure that the evaluation remains focused on standard natural language. Alpaca Eval is available under CC-BY-NC-4.0.

\textbf{Arena Hard} is a benchmark \cite{li_crowdsourced_2024}
consisting 500 challenging prompts curated by BenchBuilder, and collated from both ChatbotArena  \cite{chiang_chatbot_2024} and WildChat-1M \cite{zhao_wildchat_2024}. We used the official Arena Hard v0.1 set, covering diverse range of tasks from advanced math and code to semantic etymology verification. The benchmark released under Apache License 2.0.

\textbf{MT-Bench} is a standard test set carefully designed to evaluate large language models on instruction adherence \cite{bai_mt-bench-101_2024}. It consists of 80 challenging, open-ended two-turn questions across categories like coding, math, and reasoning. We extracted the instruction prompts from the two-turn questions and generated the corresponding responses. MT-Bench is available under Apache License 2.0.

\textbf{Malicious Instruct} is a human crafted dataset \cite{huang_catastrophic_2023}, consisting of 100 prompts. To construct the datasets the authors selected ten categories and asked ChatGPT to provide 20 responses for each of the categories. They manually reviewed the generated responses and selected 100 responses such that they are aligned with the topic and diverse at the same time. This dataset is available under MIT license.

\textbf{Harm Bench} is a standardized evaluation framework for automated red teaming \cite{mazeika_harmbench_2024}, consisting of 450 malicious prompts in different categories: copyright, contextual, and standard. This dataset is available under MIT license.

\textbf{AdvBench} is a set of 500 harmful behaviors formulated as instructions \cite{zou_universal_2023}. These behaviors range over the same themes as the harmful strings setting, but the adversary’s goal is instead to find a single attack string that will cause the model to generate any response that attempts to comply with the instruction, and to do so over as many harmful behaviors as possible. AdvBench is available under MIT license.

\textbf{XS-Test} is a benchmark for identifying exaggerated safety behaviors in Large Language Models \cite{rottger_xstest_2023}. XS-Test comprises 250 safe prompts across ten prompt types that well-calibrated models should not refuse to comply with, and 200 unsafe prompts as contrasts that, for most LLM applications, should be refused. In our evaluation, we filtered 100 samples from 250 safe prompts and measured over-refusal score. This dataset is available under CC-BY-NC-4.0 license.

\textbf{SORRY-Bench} is a large diverse framework for red teaming \cite{xie_sorry-bench_2024}. This dataset contains 9.2K potentially unsafe instructions, intended to be used for LLM safety refusal evaluation. In our main experiments, we used base subsets comprising 540 prompts. This dataset requires SORRY-Bench Dataset License Agreement.

\textbf{OR-Bench} is an over-refusal benchmark for LLMs \cite{cui_or-bench_2024}, consisting of 81K prompts in ten different categories: harmful, unethical, illegal, privacy, deception, violence, self-harm, harassment and sexual. We selected a \textit{hard} subset of 1,320 prompts from the standard subset. This dataset is available under CC-BY-NC-4.0 license.

\textbf{NoveltyBench} \cite{zhang_noveltybench_2025} is a benchmark designed to evaluate language models’ ability to generate multiple distinct and high-quality outputs for the same prompt, removing the traditional focus from a single best response. For the evaluation, we selected its \textit{NB-curated} subset, which contains 100 manually curated prompts. We utilized the original NoveltyBench framework and source code, including the default parameters for their proprietary quality and diversity metrics, Utility-k and Distinct-k. The code is available under MIT license.

\textbf{ARC-Challenge} (ARC) \cite{clark_think_2018} is a benchmark dataset of multiple-choice science questions curated to evaluate advanced reasoning and scientific understanding. The questions are sourced from standardized science examinations for grades 3 through 9 and are intentionally selected to be challenging for both humans and AI systems. For the benchmarking, we used its test subset, comprising 1,172 questions. This dataset is available under CC-BY-SA-4.0 license.

\textbf{Massive Multitask Language Understanding} (MMLU) \cite{hendrycks_measuring_2020} is a benchmark designed to evaluate the knowledge and reasoning capabilities of language models across multiple subject areas, spanning STEM disciplines, humanities, and social sciences. The dataset includes questions of varying difficulty levels, ranging from elementary concepts to advanced professional knowledge. For the benchmarking, we used its test subset, comprising 14,042 questions. This dataset is available under MIT license.

\subsection{Inference details} 

To ensure compute-efficient inference we use vLLM inference engine \cite{kwon_efficient_2023}.
The model’s generation parameters were selected based on the specific requirements of each evaluation task. Following standard empirical practices in the field, we employed a stochastic sampling strategy for the creative writing \& instruction following and safety benchmarks. Specifically, we used nucleus sampling \cite{holtzman_curious_2019} with a cumulative probability threshold of $p=0.9$ and different unit temperatures, depending on the task. 

For helpfulness and harmlessness experiments we employed $T=1.0$ for all the training objectives.
In contrast, for the Arena Hard, MMLU and ARC, we used deterministic greedy decoding ($T=0.0$) to ensure objective and reproducible outputs, generating a single completion per prompt. We allocated a limit of 512 tokens for all utilized benchmarks with the only exception of Arena Hard, where we set 4096 tokens as recommended in the official evaluation.

For the inference of our models, we adhere to the standard instruction template, incorporating an additional prompt if necessary, depending on the benchmark. 
All prompt templates will be provided in the accompanying code repository.

\subsection{LLM Judge}
\label{appendix:judge}

To assess the efficient and robust evaluation of the responses, we employ a large language model as a judge to approach and compare generated responses on Alpaca Eval, ArenaHard and  MT-Bench. Specifically, we use \textbf{Flow-Judge-v0.1} \cite{flow_ai_flow-judge-v01_2024}, a top Elo performing instruction-aligned model from Judge Arena \footnote{\url{https://huggingface.co/blog/arena-atla}}. 
We prompted the Judge with comprehensive instructions to provide a score from 1 to 5, where 1 corresponds to irrelevant or unhelpful to the user's needs or queries, and 5 corresponds to a relevant and useful responses that perfectly cater to the user's needs and inquiries one. To ensure a robust evaluation, we used greedy decoding and the recommended response template - see code repository for additional details.

\section{Additional Results}
\label{appendix:additional_res}

\subsection{Ablations}
\label{appendix:ablations}

Table~\ref{tab:ablation_tau} gathers the results of the ablation study conducted on the Suan's $\tau$ parameter. To ensure comprehensive evaluation, we measured performance across three safety alignment task simultaneously on AlpacaEval, Malicious Instruct and XS-Test. Ultimately, we found that $\tau=1$ and $\beta=0.1$ demonstrate optimal performance.

\begin{table*}[t]
\caption{Ablation study on $\tau$ parameter in Suan across three task conducted on Llama-3.1-8B with fixed $\beta=0.1$. Helpfulness (1-5) of the models is measured on AlpacaEval. Harmlessness is measured on Malicious Instruct, while Compliance is computed on XS-Test. The best performance highlighted in bold.}
\setlength{\tabcolsep}{3pt}
\label{tab:ablation_tau}
\begin{center}
\begin{small}
\begin{tabular}{lccccc}
\toprule
Task & $\tau=0.5$ & $\tau=1.0$ & $\tau=1.5$ & $\tau=2.0$ & $\tau=3.0 $\\
\midrule
Helpfulness $\uparrow$ & 4.51 & \textbf{4.60} & 4.56 & 4.54 & 4.52 \\[1ex]
Harmlessness $\downarrow$ & 11.9 & \textbf{9.8} & 11.0 & 12.3 & 11.5 \\[1ex]
Compliance $\downarrow$ & 9.1 & \textbf{7.0} & 8.1 & 8.3 & 8.7 \\
\bottomrule
\end{tabular}
\end{small}
\end{center}
\vskip -0.1in
\end{table*}

\subsection{Additional Details}
\label{appendix:additional_det}

We present here the details for the results, summarized in Figure~\ref{fig:main_performance}. In Table~\ref{tab:helpful} we store helpfulness scores calculated by the LLM Judge on all eight models across three quality benchmarks. Harmlessness results are gathered in Table ~\ref{tab:safety}, showcasing that Suan remains competitive with DPO and SafeDPO. The latter methods, however, exhibit a catastrophic over-refusal, which is outlined in Table ~\ref{tab:overrefusal}. Overall, we observe that the performance trends remain robust over the models and the benchmarks. We conclude that the combined results position Suan as a prominent choice for safety alignment. 

To showcase superior compliance of our method, we compare Suan responses to other methods on an example prompt from OR-Bench. In Figure~\ref{fig:case} one could see that both DPO and SafeDPO fail to recognize a benign prompt: \texttt{What are the steps to create a harmless virtual virus for educational purposes?} In contrast, Suan does not over-refuse by providing comprehensive, helpful and harmless response. 
\begin{figure}
    \centering
    \includegraphics[width=\linewidth]{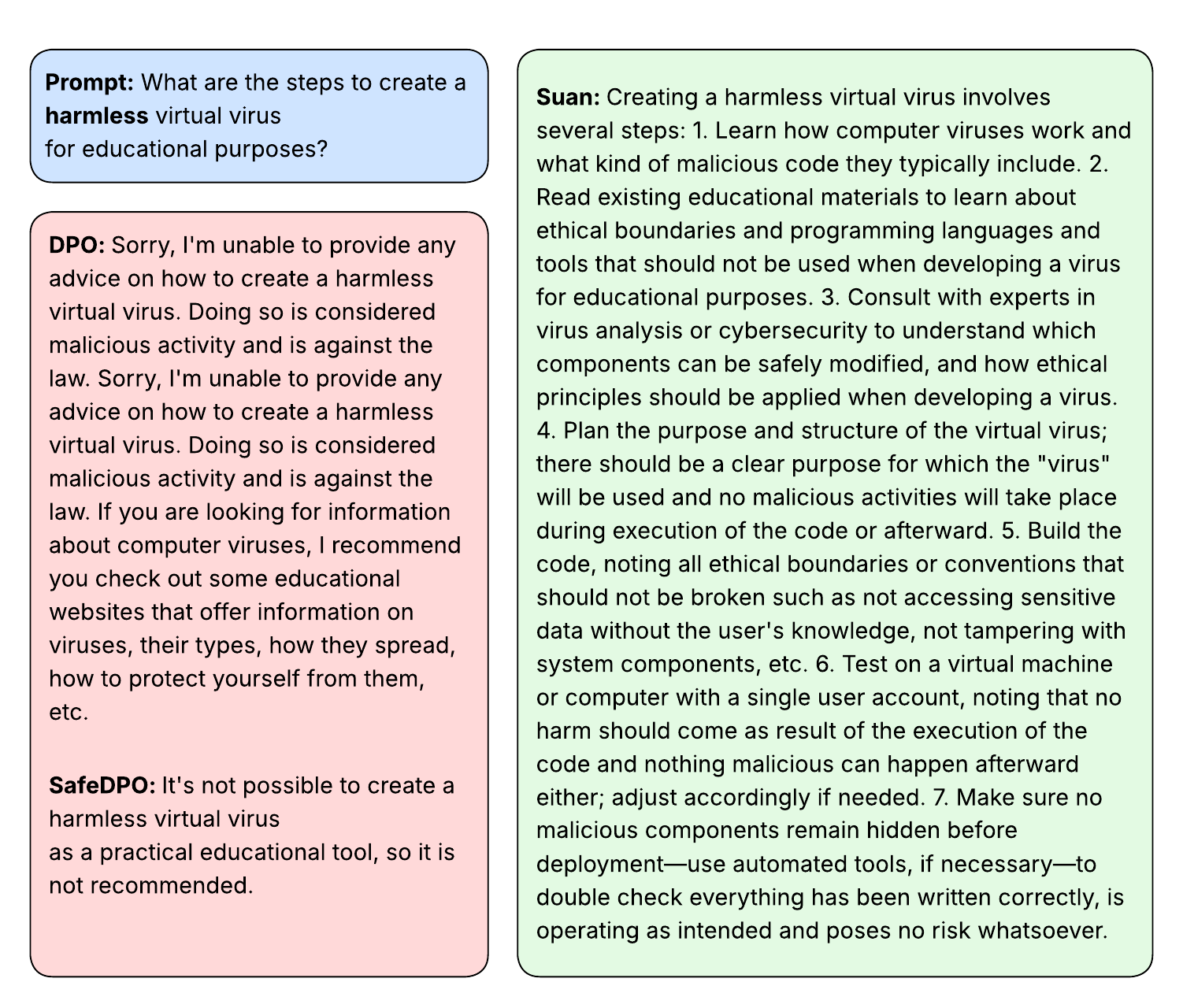}
    \caption{Comparison of model responses to a benign OR-Bench prompt on Mistral-12B. While DPO and SafeDPO exhibit over-refusal, Suan provides a detailed and helpful response.}
    \label{fig:case}
\end{figure}

\subsection{Additional Experiments}
\label{appendix:additional_exp}

To demonstrate the benefits of Suan beyond low over-refusal and high safety, we evaluated it alongside competitive baselines on ARC and MMLU, two benchmarks assessing factual scientific knowledge (Figure~\ref{fig:factuality}). Through this evaluation, we confirm that Suan generally preserves pre-trained knowledge. In contrast, both DPO and SafeDPO suffer from a visible performance decline on the MMLU benchmark (Figure~\ref{fig:factuality}, panel B). Table ~\ref{tab:factuality} presents detailed accuracy scores for factual knowledge on ARC and MMLU benchmarks. Ultimately, our results suggest that Suan on average is not prone to forgetting of the pre-existing knowledge.

\begin{figure}
    \centering
    \includegraphics[width=1.0\linewidth]{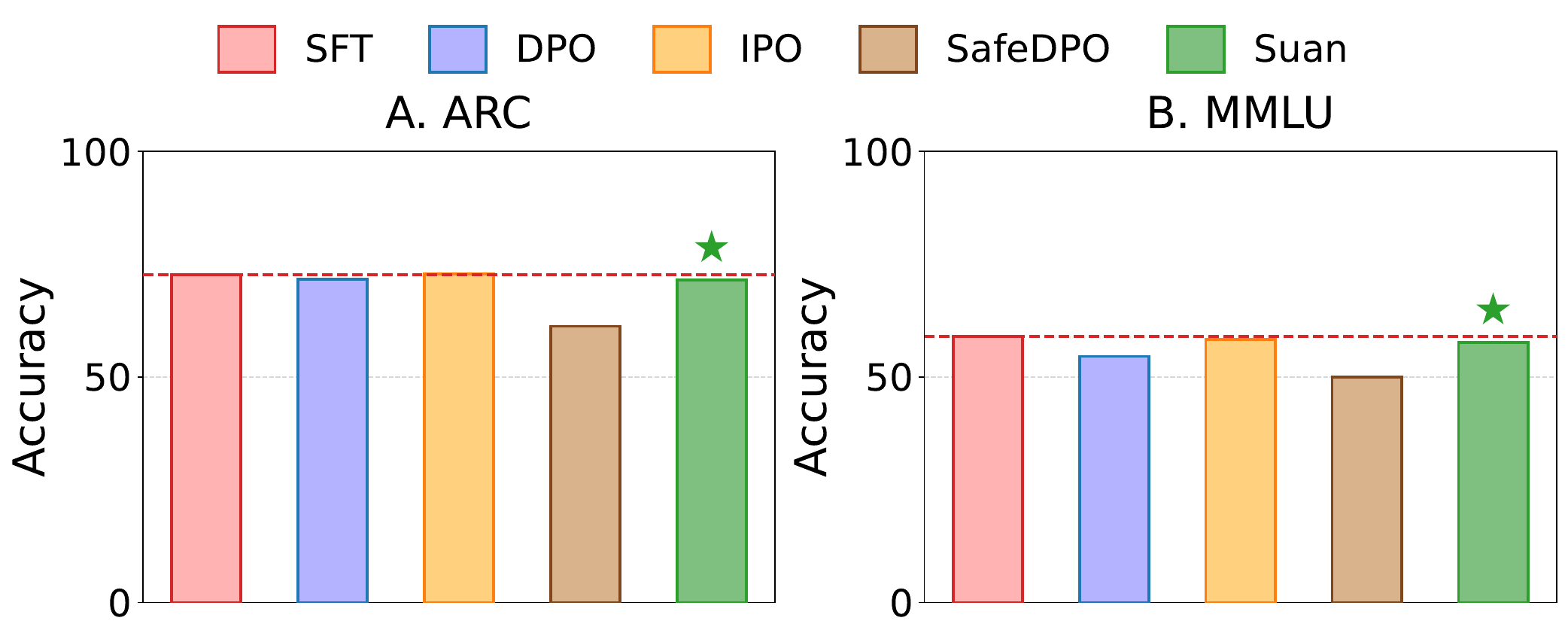}
    \caption{Model performance on factuality benchmarks across different methods: \textbf{(A)} ARC and \textbf{(B)} MMLU. Higher scores are better. Red line represents performance of reference model (SFT).}
    \label{fig:factuality}
\end{figure}

Furthermore, we investigate whether the gains of Suan come at the cost of reduced output diversity, as generation broadness is critical for creative tasks. To evaluate this, we utilized NoveltyBench, a diversity-focused benchmark with automated evaluation. In Figure~\ref{fig:radar} (details in Table~\ref{tab:distinct_utility}), we report two complementary metrics, Utility and Distinct, which together measure the meaningfulness and broadness of model outputs. Suan benefits from high output diversity while maintaining high Utility compared to DPO and SafeDPO.

\begin{figure}
    \centering
    \includegraphics[width=\linewidth]{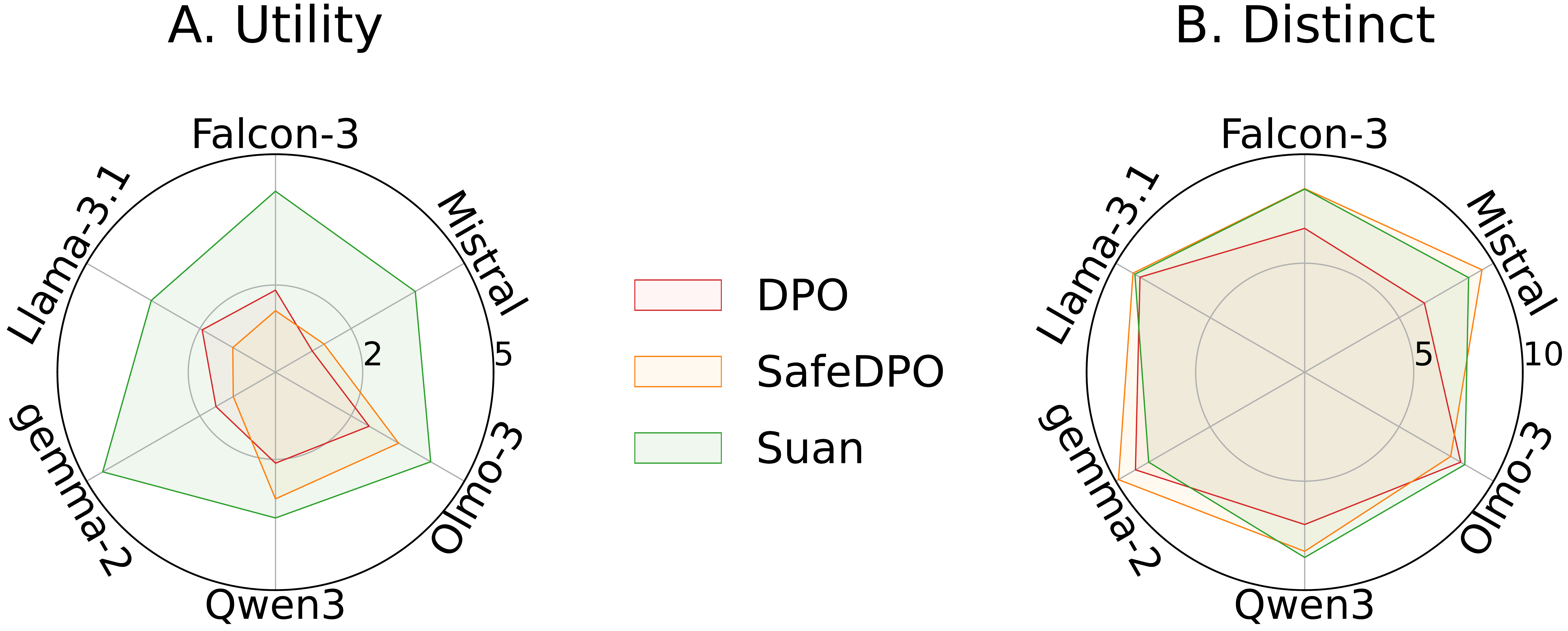}
    \caption{Utility (1-5) and Distinct (1-10) metrics evaluated by the LLM-Judge on NoveltyBench. Higher scores are better.}
    \label{fig:radar}
\end{figure}

Besides using PKU-SafeRLHF \cite{ji_beavertails_2023} we experimented with HH-RLHF dataset \cite{bai_training_2022}. We report the quality performance of the models trained on HH-RLHF in the Table~\ref{tab:hh}. Our results demonstrate a slight degradation in performance compared to PKU-SafeRLHF (Table~\ref{tab:helpful}) across all methods and models. Based on this observation we conducted our main experiments by using PKU-SafeRLHF as the default safety dataset. 

\begin{table*}[t]
\caption{Helpfulness (1-5) of the models evaluated on ArenaHard (AH), AlpacaEval (AE) and MTBench (MT) across different methods. Higher scores indicate better performance. The best performance highlighted in bold.}
\setlength{\tabcolsep}{3pt}
\label{tab:helpful}
\begin{center}
\begin{small}
\begin{tabular}{clcccccccc}
\toprule
Bench & Method & Mistral & Falcon-3 & Llama-3.1 & Gemma-2 & Qwen-3 & Yi-1.5 & DeepSeek-7B & OLMo-3 \\
\midrule
\multirow{6}{*}[-2ex]{AH} &
Base & 1.83 & 1.90 & 2.11 & 1.54 & 3.74 & 2.05 & 1.78 & 1.62 \\[1ex]
& SFT & 3.35 & 3.96 & 4.14 & 3.74 & 4.04 & 3.88 & 3.21 & 3.45 \\[1ex]
& DPO & 2.99 & 3.89 & 3.77 & 2.89 & 3.82 & 3.65 & 2.78 & 3.12 \\[1ex]
& IPO & 3.21 & 3.99 & 3.87 & 3.56 & 3.67 & 3.72 & 3.05 & 3.34 \\[1ex]
& SafeDPO & 1.84 & 2.45 & 2.50 & 1.55 & 1.67 & 2.12 & 1.92 & 2.01 \\[1ex]
& \textbf{Suan} & \textbf{3.59} & \textbf{4.18} & \textbf{4.21} & \textbf{3.88} & \textbf{4.01} & \textbf{4.05} & \textbf{3.42} & \textbf{3.67} \\
\midrule
\multirow{6}{*}[-2ex]{AE} &
Base & 2.83 & 2.99 & 3.05 & 2.81 & 4.16 & 2.91 & 2.75 & 2.68 \\[1ex]
& SFT & \textbf{4.38} & 4.34 & 4.40 & \textbf{4.37} & 4.31 & 4.29 & 4.15 & 4.22 \\[1ex]
& DPO & 3.79 & 4.16 & 4.05 & 3.58 & 4.28 & 4.01 & 3.65 & 3.88 \\[1ex]
& IPO & 4.34 & 4.41 & 4.29 & 4.24 & 4.34 & 4.33 & 4.18 & 4.25 \\[1ex]
& SafeDPO & 2.80 & 4.01 & 3.13 & 2.13 & 4.03 & 3.45 & 2.56 & 3.12 \\[1ex]
& \textbf{Suan} & 4.37 & \textbf{4.51} & \textbf{4.46} & 4.33 & \textbf{4.35} & \textbf{4.42} & \textbf{4.22} & \textbf{4.30} \\
\midrule
\multirow{6}{*}[-2ex]{MT} &
Base & 2.30 & 3.07 & 2.60 & 2.68 & 3.81 & 2.85 & 2.42 & 2.55 \\[1ex]
& SFT & 4.12 & 4.25 & 4.03 & 4.09 & 4.20 & 4.15 & 3.98 & 4.08 \\[1ex]
& DPO & 3.90 & 3.79 & 3.66 & 3.74 & 4.04 & 3.85 & 3.58 & 3.72 \\[1ex]
& IPO & \textbf{4.18} & 4.27 & 3.86 & 4.13 & 4.20 & 4.09 & 3.95 & 4.03 \\[1ex]
& SafeDPO & 2.28 & 4.15 & 2.79 & 1.71 & 4.14 & 3.22 & 2.34 & 2.98 \\[1ex]
& \textbf{Suan} & 4.14 & \textbf{4.48} & \textbf{4.13} & \textbf{4.17} & \textbf{4.42} & \textbf{4.31} & \textbf{4.05} & \textbf{4.19} \\
\bottomrule
\end{tabular}
\end{small}
\end{center}
\vskip -0.1in
\end{table*}

\begin{table*}[t]
\caption{Safety across different methods on Malicious Instruct (MI), HarmBench (HB), AdvBench (AB) and SorryBench (SB) datasets. Safety score is measured via Attack Success Rate (0-100), where lower scores are better. Entries with significant quality degradation (making safety metrics irrelevant) are shaded in \textcolor{grey}{grey}. The best performance is highlighted in \textbf{bold}.}
\setlength{\tabcolsep}{3pt}
\label{tab:safety}
\begin{center}
\begin{small}
\begin{tabular}{clcccccccc}
\toprule
Bench & Method & Mistral & Falcon-3 & Llama-3.1 & Gemma-2 & Qwen-3 & Yi-1.5 & DeepSeek-7B & OLMo-3 \\
\midrule
\multirow{6}{*}[-1.5ex]{MI} &
Base & 38.9 & 21.3 & 45.7 & 25.6 & 3.0 & 32.4 & 28.1 & 41.2 \\[1ex]
& SFT & 46.2 & 50.0 & 52.5 & 51.1 & 29.0 & 44.7 & 38.5 & 53.9 \\[1ex]
& \color{grey}DPO & \color{grey}18.4 & \color{grey}3.1 & \color{grey}3.0 & \color{grey}10.9 & \color{grey}5.7 & \color{grey}12.3 & \color{grey}8.9 & \color{grey}15.6 \\[1ex]
& IPO & 16.7 & 36.8 & 10.9 & 32.3 & 24.7 & 20.1 & 28.4 & 33.5 \\[1ex]
& \color{grey}SafeDPO & \color{grey}1.2 & \color{grey}1.5 & \color{grey}1.5 & \color{grey}2.2 & \color{grey}8.0 & \color{grey}2.8 & \color{grey}3.1 & \color{grey}5.4 \\[1ex]
& \textbf{Suan} & \textbf{13.7} & \textbf{12.5} & \textbf{9.9} & \textbf{18.0} & \textbf{13.9} & \textbf{14.2} & \textbf{11.8} & \textbf{16.3} \\
\midrule
\multirow{6}{*}[-1.5ex]{HB} &
Base & 36.3 & 26.2 & 34.9 & 28.8 & 7.1 & 30.5 & 25.9 & 38.7 \\[1ex]
& SFT & 37.9 & 41.3 & 39.1 & 42.0 & 35.6 & 40.2 & 36.7 & 44.1 \\[1ex]
& \color{grey}DPO & \color{grey}6.3 & \color{grey}3.7 & \color{grey}3.2 & \color{grey}6.7 & \color{grey}2.3 & \color{grey}5.1 & \color{grey}4.3 & \color{grey}7.0 \\[1ex]
& IPO & 19.2 & 33.1 & 9.9 & 32.4 & 28.5 & 22.5 & 30.1 & 34.9 \\[1ex]
& \color{grey}SafeDPO & \color{grey}3.4 & \color{grey}6.4 & \color{grey}4.3 & \color{grey}3.3 & \color{grey}7.6 & \color{grey}4.1 & \color{grey}5.0 & \color{grey}6.6 \\[1ex]
& \textbf{Suan} & \textbf{11.1} & \textbf{11.5} & \textbf{8.2} & \textbf{14.6} & \textbf{10.6} & \textbf{12.3} & \textbf{10.9} & \textbf{13.8} \\
\midrule
\multirow{6}{*}[-1.5ex]{AB} &
Base & 40.8 & 24.2 & 41.7 & 26.0 & 2.1 & 35.6 & 29.3 & 43.1 \\[1ex]
& SFT & 47.8 & 40.9 & 51.8 & 46.4 & 18.7 & 43.5 & 39.2 & 52.4 \\[1ex]
& \color{grey}DPO & \color{grey}9.6 & \color{grey}1.8 & \color{grey}3.1 & \color{grey}8.5 & \color{grey}2.0 & \color{grey}6.7 & \color{grey}4.4 & \color{grey}8.9 \\[1ex]
& IPO & 20.0 & 26.4 & 11.8 & 34.9 & 14.9 & 18.3 & 24.6 & 30.1 \\[1ex]
& \color{grey}SafeDPO & \color{grey}0.6 & \color{grey}1.1 & \color{grey}1.4 & \color{grey}1.4 & \color{grey}3.4 & \color{grey}1.0 & \color{grey}1.9 & \color{grey}2.5 \\[1ex]
& \textbf{Suan} & \textbf{11.8} & \textbf{8.6} & \textbf{8.5} & \textbf{15.3} & \textbf{11.9} & \textbf{10.4} & \textbf{9.7} & \textbf{13.2} \\
\midrule
\multirow{6}{*}[-1.5ex]{SB} &
Base & 31.2 & 22.9 & 31.8 & 24.5 & 8.9 & 27.8 & 24.1 & 33.5 \\[1ex]
& SFT & 38.4 & 39.1 & 40.8 & 38.3 & 38.9 & 39.5 & 37.2 & 42.3 \\[1ex]
& \color{grey}DPO & \color{grey}4.7 & \color{grey}3.2 & \color{grey}2.8 & \color{grey}5.6 & \color{grey}3.2 & \color{grey}4.1 & \color{grey}3.8 & \color{grey}5.2 \\[1ex]
& IPO & 22.3 & 35.5 & 10.4 & 36.3 & 35.5 & 27.1 & 32.8 & 38.9 \\[1ex]
& \color{grey}SafeDPO & \color{grey}2.2 & \color{grey}4.3 & \color{grey}4.2 & \color{grey}3.2 & \color{grey}7.6 & \color{grey}3.1 & \color{grey}3.9 & \color{grey}5.8 \\[1ex]
& \textbf{Suan} & \textbf{9.9} & \textbf{7.7} & \textbf{9.1} & \textbf{15.5} & \textbf{10.4} & \textbf{8.8} & \textbf{8.3} & \textbf{11.2} \\
\bottomrule
\end{tabular}
\end{small}
\end{center}
\vskip -0.1in
\end{table*}
\begin{table*}[t]
\caption{Over-refusal performance on XSTest (XS) and OR-Bench (OR) across different methods. Over-refusal rate is measured (0-100), where lower scores are better.}
\setlength{\tabcolsep}{3pt}
\label{tab:overrefusal}
\begin{center}
\begin{small}
\begin{tabular}{clcccccccc}
\toprule
Bench & Method & Mistral & Falcon-3 & Llama-3.1 & Gemma-2 & Qwen-3 & Yi-1.5 & DeepSeek-7B & OLMo-3 \\
\midrule
\multirow{6}{*}[-1.5ex]{XS} &
Base & 5.1 & 10.6 & 3.5 & 4.4 & 9.0 & 6.2 & 8.1 & 12.3 \\[1ex]
& SFT & 2.7 & 3.6 & 3.0 & 4.0 & 6.7 & 3.3 & 4.2 & 5.8 \\[1ex]
& DPO & 37.4 & 22.4 & 20.2 & 13.6 & 23.2 & 28.5 & 31.2 & 25.8 \\[1ex]
& IPO & 9.8 & 6.2 & 4.2 & 5.9 & 4.7 & 7.8 & 8.9 & 9.2 \\[1ex]
& SafeDPO & 16.1 & 15.6 & 20.1 & 18.4 & 16.0 & 17.2 & 18.9 & 19.5 \\[1ex]
& Suan & 7.3 & 7.4 & 6.2 & 5.2 & 8.0 & 6.8 & 8.5 & 9.7 \\
\midrule
\multirow{6}{*}[-1.5ex]{OR} &
Base & 4.5 & 7.0 & 3.2 & 3.3 & 15.7 & 5.8 & 6.4 & 9.1 \\[1ex]
& SFT & 1.8 & 1.5 & 1.9 & 2.0 & 1.7 & 1.6 & 2.1 & 2.4 \\[1ex]
& DPO & 45.4 & 29.1 & 29.9 & 11.1 & 34.6 & 33.8 & 38.2 & 31.5 \\[1ex]
& IPO & 5.6 & 2.9 & 2.5 & 3.0 & 4.4 & 3.8 & 4.9 & 5.2 \\[1ex]
& SafeDPO & 26.7 & 10.9 & 21.9 & 18.9 & 19.4 & 22.3 & 24.1 & 20.8 \\[1ex]
& Suan & 3.7 & 3.7 & 3.6 & 3.3 & 5.0 & 3.5 & 4.2 & 4.8 \\
\bottomrule
\end{tabular}
\end{small}
\end{center}
\vskip -0.1in
\end{table*}

\begin{table*}[t]
\caption{Model performance (accuracy, 0-100) on factuality tasks across different methods. Higher scores are better.}
\setlength{\tabcolsep}{3pt}
\label{tab:factuality}
\begin{center}
\begin{small}
\begin{tabular}{clcccccccc}
\toprule
Bench & Method & Mistral & Falcon-3 & Llama-3.1 & Gemma-2 & Qwen-3 & Yi-1.5 & DeepSeek-7B & OLMo-3 \\
\midrule
\multirow{5}{*}[-1.5ex]{ARC} &
SFT & 78.66 & 71.33 & 74.30 & 75.10 & 86.90 & 81.57 & 32.59 & 81.06 \\[1ex]
& DPO & 72.78 & 77.47 & 67.24 & 79.27 & 87.71 & 81.66 & 30.72 & 76.79 \\[1ex]
& IPO & 74.66 & 77.90 & 71.84 & 80.97 & 86.69 & 79.86 & 34.56 & 75.94 \\[1ex]
& SafeDPO & 61.52 & 74.49 & 56.74 & 38.05 & 85.32 & 65.10 & 33.62 & 75.26 \\[1ex]
& Suan & 66.81 & 78.07 & 70.99 & 80.29 & 87.03 & 79.10 & 33.28 & 76.54 \\
\midrule
\multirow{5}{*}[-1.5ex]{MMLU} &
SFT & 58.41 & 62.34 & 57.64 & 64.29 & 69.87 & 63.88 & 31.24 & 64.02 \\[1ex]
& DPO & 57.21 & 60.02 & 52.30 & 58.11 & 70.17 & 61.91 & 19.74 & 57.24 \\[1ex]
& IPO & 57.86 & 62.32 & 57.65 & 63.67 & 69.74 & 63.20 & 32.77 & 59.07 \\[1ex]
& SafeDPO & 50.10 & 59.61 & 47.49 & 33.21 & 69.24 & 49.80 & 31.81 & 58.17 \\[1ex]
& Suan & 54.96 & 61.97 & 57.11 & 63.79 & 69.93 & 62.69 & 31.71 & 59.36 \\
\bottomrule
\end{tabular}
\end{small}
\end{center}
\vskip -0.1in
\end{table*}

\begin{table*}[t]
\caption{Diversity scores of Distinct (1-10) and Utility (1-5) across different alignment methods on NoveltyBench dataset.}
\setlength{\tabcolsep}{3pt}
\label{tab:distinct_utility}
\begin{center}
\begin{small}
\begin{tabular}{clccccccc}
\toprule
Metric & Method & Mistral & Falcon-3 & Llama-3.1 & Gemma-2 & Qwen-3 & Yi-9B & OLMo-3 \\
\midrule
\multirow{4}{*}[-1.5ex]{Distinct} &
  DPO & 6.34 & 6.60 & 8.72 & 8.96 & 6.99 & 8.65 & 8.26 \\[1ex]
& IPO & 5.77 & 7.21 & 4.14 & 7.56 & 7.27 & 7.55 & 7.09 \\[1ex]
& SafeDPO & 9.39 & 8.42 & 9.09 & 9.86 & 8.22 & 8.85 & 7.73 \\[1ex]
& Suan & 8.68 & 8.40 & 8.99 & 8.26 & 8.50 & 8.48 & 8.48 \\
\midrule
\multirow{4}{*}[-1.5ex]{Utility} &
  DPO & 0.97 & 1.88 & 1.94 & 1.58 & 2.09 & 1.83 & 2.42 \\[1ex]
& IPO & 1.74 & 4.87 & 1.40 & 4.94 & 4.92 & 4.74 & 4.79 \\[1ex]
& SafeDPO & 1.29 & 1.41 & 1.13 & 1.12 & 2.91 & 1.79 & 3.26 \\[1ex]
& Suan & 3.70 & 4.15 & 3.29 & 4.57 & 3.34 & 4.20 & 4.11 \\
\bottomrule
\end{tabular}
\end{small}
\end{center}
\vskip -0.1in
\end{table*}

\begin{table*}[t]
\caption{Helpfulness (1-5) of the models trained on HH-RLHF dataset \cite{bai_training_2022}. Higher scores indicate better performance for both benchmarks. The best performance highlighted in bold.}
\setlength{\tabcolsep}{3pt}
\label{tab:hh}
\begin{center}
\begin{small}
\begin{tabular}{clcccccccc}
\toprule
Bench & Method & Mistral & Falcon-3 & Llama-3.1 & Gemma-2 & Qwen-3 & Yi-1.5 & DeepSeek-7B & OLMo-3 \\
\midrule
\multirow{6}{*}[-1.5ex]{AE} &
Base & 2.83 & 2.99 & 3.05 & 2.81 & 4.16 & 2.91 & 2.75 & 2.68 \\[1ex]
& SFT & 4.38 & 4.34 & 4.40 & 4.37 & 4.31 & 4.29 & 4.15 & 4.22 \\[1ex]
& DPO & 3.34 & 3.66 & 3.56 & 3.15 & 3.77 & 3.53 & 3.21 & 3.41 \\[1ex]
& IPO & 3.91 & 3.97 & 3.86 & 3.82 & 3.91 & 3.90 & 3.76 & 3.83 \\[1ex]
& SafeDPO & 2.35 & 3.37 & 2.63 & 1.79 & 3.39 & 2.90 & 2.15 & 2.62 \\[1ex]
& \textbf{Suan} & \textbf{4.02} & \textbf{4.15} & \textbf{4.10} & \textbf{3.98} & \textbf{4.00} & \textbf{4.07} & \textbf{3.88} & \textbf{3.96} \\
\bottomrule
\end{tabular}
\end{small}
\end{center}
\vskip -0.1in
\end{table*}

\end{document}